\documentclass{article} 
\usepackage{iclr2025_conference,times}
\usepackage[T1]{fontenc}


\usepackage{amsfonts, amssymb, amsmath, amsthm, bm, accents, dsfont}
\usepackage{fontawesome5}

\usepackage{hyperref, url}

\usepackage{graphicx, tabularx, caption}

\usepackage{blindtext}

\usepackage[english]{babel}

\makeatletter
\protected\def\abx@missing#1{%
  \mbox{\reset@font\color{blue}#1}}
\makeatother

\usepackage{algpseudocode,algorithm}
\algrenewcommand\algorithmicdo{}
\algrenewtext{EndFor}{\algorithmicend}
\algrenewtext{EndProcedure}{\algorithmicend}

\usepackage{xcolor}

\definecolor{stutt.blue}{RGB}{0,81,158}
\definecolor{stutt.lightblue}{RGB}{0,190,255}
\definecolor{stutt.gray}{RGB}{62, 68, 76}
\definecolor{stutt.darkblue}{RGB}{0,50,98}

\definecolor{Set1.red}{rgb}{0.894117647058824,0.101960784313725,0.109803921568627}
\definecolor{Set1.blue}{rgb}{0.215686274509804,0.494117647058824,0.721568627450980}
\definecolor{Set1.green}{rgb}{0.301960784313725,0.686274509803922,0.290196078431373}
\definecolor{Set1.purple}{rgb}{0.596078431372549,0.305882352941177,0.639215686274510}
\definecolor{Set1.orange}{rgb}{1,0.498039215686275,0}
\definecolor{Set1.yellow}{rgb}{1,1,0.2}
\definecolor{Set1.brown}{rgb}{0.650980392156863,0.337254901960784,0.156862745098039}
\definecolor{Set1.pink}{rgb}{0.968627450980392,0.505882352941176,0.749019607843137}
\definecolor{Set1.grey}{rgb}{0.6,0.6,0.6}




\usepackage{microtype}

\theoremstyle{plain}
\newtheorem{theorem}{Theorem}[section]
\newtheorem{proposition}[theorem]{Proposition}
\newtheorem{lemma}[theorem]{Lemma}

\theoremstyle{definition}
\newtheorem{definition}[theorem]{Definition}
\newtheorem{assumption}[theorem]{Assumption}
\theoremstyle{remark}


\usepackage{array}		
\usepackage{tabularx}
\usepackage{longtable}
\usepackage{booktabs}		
\usepackage[inline,shortlabels]{enumitem}
\setenumerate{itemsep=\smallskipamount,topsep=\smallskipamount,left=\parindent}
\setitemize{itemsep=\smallskipamount,topsep=\smallskipamount,left=\parindent}
\usepackage{makecell}

\usepackage{mathtools}

\usepackage[sort&compress,capitalize,nameinlink]{cleveref}
\crefname{assumption}{Assumption}{Assumptions}


\DeclareMathOperator{\E}{\mathbb{E}}

\DeclareMathOperator{\diag}{diag}

\newcommand{\norm}[1]{\ensuremath{\left\| #1 \right\|}}
\newcommand{\abs}[1]{\ensuremath{{\left\vert #1 \right\vert}}}

\DeclareMathOperator{\indicator}{\mathbb{I}}
\newcommand{\ceil}[1]{\left \lceil #1 \right \rceil}

\DeclareMathOperator*{\argmin}{argmin}

\DeclareMathOperator*{\minimize}{minimize}
\DeclareMathOperator*{\maximize}{maximize}
\DeclareMathOperator{\subjectto}{subject\ to}
\DeclareMathOperator{\find}{find}
\DeclareMathOperator{\suchthat}{such\ that}

\newcommand{\calA}{\ensuremath{\mathcal{A}}}
\newcommand{\calB}{\ensuremath{\mathcal{B}}}

\newcommand{\calD}{\ensuremath{\mathcal{D}}}

\newcommand{\calF}{\ensuremath{\mathcal{F}}}
\newcommand{\calG}{\ensuremath{\mathcal{G}}}

\newcommand{\calP}{\ensuremath{\mathcal{P}}}

\newcommand{\calR}{\ensuremath{\mathcal{R}}}
\newcommand{\calS}{\ensuremath{\mathcal{S}}}
\newcommand{\calT}{\ensuremath{\mathcal{T}}}

\newcommand{\calZ}{\ensuremath{\mathcal{Z}}}



\newcommand{\bbN}{\ensuremath{\mathbb{N}}}

\newcommand{\bbR}{\ensuremath{\mathbb{R}}}


\newcommand{\fkm}{\ensuremath{\mathfrak{m}}}

\newcommand{\fkp}{\ensuremath{\mathfrak{p}}}





\newcommand{\setN}{\bbN}

\newcommand{\setR}{\bbR}

\newcommand{\del}{\ensuremath{\partial}}


\makeatletter
\def\nnil{\nil}
\newcounter{prob}
\newenvironment{prob}[1][\nil]{%
	\def\tmp{#1}
	\equation
	\ifx\tmp\nnil
		\refstepcounter{prob}
		\tag{P\Roman{prob}}
	\else
		\tag{\tmp}
	\fi
	\aligned%
}{%
	\endaligned\endequation%
}
\makeatother

\newenvironment{prob*}{%
	\csname equation*\endcsname%
	\aligned%
}{%
	\endaligned%
	\csname endequation*\endcsname%
}

\graphicspath{{figures/}}

\usepackage{caption}
\usepackage{xspace}

\usepackage{multirow}
\usepackage{booktabs}  

\newcommand{\lpde}{\ensuremath{\ell_{\text{pde}}}}
\newcommand{\lbc}{\ensuremath{\ell_{\text{bc}}}}
\newcommand{\ppinn}{P\textsuperscript{2}INN\xspace}
\newcommand{\dvar}[1]{\lambda^\text{#1}}
\newcommand{\data}[1]{x^\text{#1}_n, t^\text{#1}_n}
\newcommand{\datapar}[1]{\data{#1}, \pi^\text{#1}_n}
\newcommand{\loss}[1]{\ell^\text{#1}}


\title{Solving Differential Equations with \\ Constrained Learning}

\author{Viggo~Moro\\
University of Oxford\\
\texttt{viggo.moro@cs.ox.ac.uk}\\
\And
Luiz~F.~O.~Chamon\\
École polytechnique\\
\texttt{luiz.chamon@polytechnique.edu}
}

\newcommand{\clear}[1]{}

\iclrfinalcopy 
\begin{document}

\maketitle

\begin{abstract}
(Partial) differential equations~(PDEs) are fundamental tools for describing natural phenomena, making their solution crucial in science and engineering. While traditional methods, such as the finite element method, provide reliable solutions, their accuracy is often tied to the use of computationally intensive fine meshes. Moreover, they do not naturally account for measurements or prior solutions, and any change in the problem parameters requires results to be fully recomputed. Neural network-based approaches, such as physics-informed neural networks and neural operators, offer a mesh-free alternative by directly fitting those models to the PDE solution. They can also integrate prior knowledge and tackle entire families of PDEs by simply aggregating additional training losses. Nevertheless, they are highly sensitive to hyperparameters such as collocation points and the weights associated with each loss. This paper addresses these challenges by developing a \emph{science-constrained learning}~(SCL) framework. It demonstrates that finding a (weak)~solution of a PDE is equivalent to solving a constrained learning problem with worst-case losses. This explains the limitations of previous methods that minimize the expected value of aggregated losses. SCL also organically integrates structural constraints~(e.g., invariances) and (partial)~measurements or known solutions. The resulting constrained learning problems can be tackled using a practical algorithm that yields accurate solutions across a variety of PDEs, neural network architectures, and prior knowledge levels without extensive hyperparameter tuning and sometimes even at a lower computational~cost.
\end{abstract}

\section{Introduction}

(Partial)~differential equations~(PDEs) are key tools in science and engineering, playing a central role in the solution of inverse problems, systems engineering, and the description of natural phenomena~\citep{Lustig08c, Potter10s, Molesky18i, Evans10p}. As such, a variety of numerical methods have been developed to approximate their solutions, such as the well-known finite element method~(FEM). Despite their celebrated precision and approximation guarantees, these methods provide solutions to a single PDE at a time. Any change to the problem, from boundary condition to mesh size, requires the solution to be recomputed. They are therefore unable to incorporate prior knowledge, such as real-world measurements or known solutions to similar equations~\citep{Brenner07t, LeVeque07f, Katsikadelis16t}.

Methods based on neural networks~(NNs),
such as physics-informed NNs~(PINNs)~\citep{Lagaris98a, Raissi19p, Lu21p}
and neural operators~(NOs)~\citep{Li21f, Lu21l, Rahman23u},
have been developed with these challenges in mind. Rather than discretizing the PDE, they directly fit a NN to its solution. They can therefore be trained to simultaneously solve entire families of PDEs and interpolate known solutions by simply incorporating additional losses to their training objectives~\citep{Li21f, Cho24p, Li24p}. Yet, these methods are highly sensitive to hyperparameters such as the weights used to combine the training losses and the collocation points used to evaluate the PDE residuals, which often leads to low quality or trivial solutions~\citep{Krishnapriyan21c, Wight21s, Wang22w, Wang22i}. This has prompted a variety of heuristics to be proposed based on \emph{ad hoc} weight updates~\citep{Wang21u, Maddu22i} and adaptive or causal sampling~\cite{Nabian21e, Krishnapriyan21c, McClenny23s, Penwarden23a}~(see Appendix~\ref{app:related_work} for further related works).

This paper shows that these limitations are not methodological, but epistemological. It is not an issue of \emph{how} the problem is solved, but \emph{which} problem is being solved. To do so, it
\begin{itemize}[itemsep=2pt,topsep=0pt]
	\item proves that obtaining a~(weak) solution of a PDE requires solving a constrained learning problem with worst-case losses~(Prop.~\ref{thm:weak_formulation_to_statistical_problem}), i.e., it is not enough to use either constrained formulations~\citep{Lu21p, Basir22p} \emph{or} worst-case losses~\citep{Wang22i, Daw23m};

	\item incorporates prior scientific knowledge on the structure~(e.g., invariance) and value~(e.g., measurements, set points) of the solution without resorting to specialized models or data transforms~(Sec.~\ref{sec:science_constrained_learning}). We therefore dub this approach \emph{science-constrained learning}~(SCL);

	\item develops a practical algorithm that foregoes the careful selection of loss weights and collocation points. Contrary to other methods, it explicitly approximates the (weak)~solution of PDEs and yields reliability metrics that capture the difficulty of fitting specific PDE parameters and/or data points~(Sec.~\ref{sec:algorithm});

	\item illustrates the effectiveness~(accuracy and computational cost) of SCL~(Sec.~\ref{sec:experiments}) for a diverse set of PDEs, NN architectures~(e.g., MLPs and NOs), and problem types~(solving a single or a parametric family of PDEs; interpolating known solutions; identifying settings that are difficult to fit).
\end{itemize}

\section{Problem Formulation}

\subsection{Boundary Value Problems}
\label{sec:BVP}

Consider a~(bounded, connected, open) region $\Omega \subset \mathbb{R}^d$ with (smooth)~boundary $\partial \Omega$ and a (partial)~differential operator~$D_\pi$ with coefficients~$\pi \in \Pi \subset \setR^q$ defined on the domain~$\calD = \Omega \times (0, T]$. Here, $\pi$ captures a (finite)~set of parameters of the phenomenon, such as the diffusion rate or viscosity. Given a space~$\calF$ of functions mapping~$\calD$ to~$\mathbb{R}$, we define a boundary value problem~(BVP) as
\begin{prob}[\textup{BVP}]\label{eq:BVP}
    \find& &&u \in \calF
    \\
    \suchthat& &&D_\pi[ u ] (x,t)  = \tau(x, t),
    	&& (x, t) \in \Omega \times (0,T] && \text{(PDE)}
    \\
    &&&u(x, 0) = h(x,0),
    	&& x \in \bar{\Omega} && \text{(IC)}
    \\
    &&&u(x, t) = h(x, t),
    	&& (x, t) \in \partial \Omega \times (0,T] && \text{(BC)}
\end{prob}
where~$\tau: \calD \rightarrow \mathbb{R}$ is a \emph{forcing function} and~$h: \calB \rightarrow \mathbb{R}$ describes the boundary~(BC) and initial~(IC) conditions over~$\calB = (\bar{\Omega} \times \{0\}) \cup (\del\Omega \times (0,T])$ for~$\bar{\Omega} = \Omega \cup \del\Omega$. In what follows, we always consider the initial condition as part of the boundary conditions and refer to them jointly as~BC. Note that the developments in this paper also apply to formulations of~\eqref{eq:BVP} involving other BCs~(e.g., \emph{Neumann}, \emph{periodic}), parameterized BCs, and vector-valued PDEs. We showcase a variety of phenomena that can be described by~\eqref{eq:BVP} in App.~\ref{app:pdes} and refer to, e.g.,~\citep{Evans10p}, for a more detailed treatment.

In general, a (strong)~solution of~\eqref{eq:BVP} need not exist in any function space~$\calF$. This motivates the rise of relaxations such as the \emph{weak formulation} that replaces the pointwise equation~(PDE) in~\eqref{eq:BVP} by the integral equation%
\footnote{Typically, the weak formulation only considers the spatial domain~$x$, handling~$t$ separately using time-stepping methods~\citep{Thomee13g}. Still, \eqref{eq:weak_form} is not uncommon and can be used to obtain weak formulations for parabolic PDEs~(see, e.g., \citep{Knabner03n, Evans10p, Steinbach20c}).}
\begin{equation}\label{eq:weak_form}
	\int_{\calD} D_\pi [ u ](x,t) \varphi(x,t) \, dx dt
		= \int_{\calD} \tau(x, t) \varphi(x,t) \, dx dt, \quad \text{for all } \varphi \in \calT
		\text{.}
\end{equation}
The~$\varphi$ are known as \emph{test functions} and~$\calT$ is typically taken to be a \emph{Sobolev space} due to its natural compatibility with this setting~(see App.~\ref{app:weak_formulation_to_statistical_problem} for further details). The name \emph{weak formulation} comes from the fact that a solution of~\eqref{eq:BVP}~(when it exists) is also a solution of~\eqref{eq:weak_form}, although the converse is not necessarily true. Indeed, \eqref{eq:weak_form} allows for a wider range of solutions with less stringent regularity requirements, particularly with respect to continuity and differentiability~\citep{Evans10p}. The BCs of~\eqref{eq:BVP} can often be homogeneized~(i.e.,~$h \equiv 0$) and imposed implicitly through~$\calT$, thus fully describing its weak solution by~\eqref{eq:weak_form}~\citep{Brenner07t, LeVeque07f, Katsikadelis16t}.

\subsection{Solving boundary value problems}
\label{sec:solving_bvp}

There exists a wide range of numerical methods for solving BVPs, most of which rely on discretizing~\eqref{eq:BVP}~(e.g., finite difference method, FDM) or its weak formulation~(e.g., FEM).
Their well-established approximation guarantees, accuracy, and stable implementations make them ubiquitous in scientific and engineering applications~\citep{Brenner07t, LeVeque07f, Katsikadelis16t}. These classical methods, however, only tackle one BVP at a time and do not naturally incorporate prior knowledge, such as measurements or known (partial)~solutions.
These challenges can be addressed by directly parameterizing the solution~$u$ of BVPs, most notably using NNs. While this approach may not achieve the precision of classical methods~\citep{Krishnapriyan21c, Wight21s, Grossmann24c, McGreivy2024w}, they are able to simultaneously provide solutions for entire families of BVPs and extrapolate new solutions from existing ones.
Generally speaking, these methods can be separated into \emph{unsupervised}, that seek to solve~\eqref{eq:BVP} directly, and \emph{supervised}, that leverage previously computed or measured solutions.

\smallskip
\textbf{Unsupervised methods.}
These approaches train a model~$u_\theta: \calD \times \Pi \to \setR$ with parameters~$\theta \in \Theta \subset \setR^p$~(e.g., a multilayer perceptron, MLP) to fit~\eqref{eq:BVP}. This is the case, for instance, of physics-informed neural networks~(PINNs) that train~$u_\theta$ by solving, for fixed weights~$\mu_\text{pde}, \mu_\text{bc} \geq 0$,
\begin{prob}\label{eq:PINN}
	\minimize_{\theta \in \Theta}\ %
		\mu_\text{pde} \lpde(\theta) + \mu_\text{bc} \lbc(\theta)
		\text{,}
\end{prob}
\vspace{-1.5\baselineskip}
\begin{alignat*}{4}
	\lpde(\theta) &\triangleq \sum_{i=1}^{I} \Bigg[
	\frac{1}{M} \sum_{m=1}^{M}
			\Big( D_{\pi_i} [ u_{\theta}(\pi_i) ](x_m,t_m)  - \tau (x_m, t_m) \Big)^2
	\Bigg]
    	\text{,} \quad&&(x_m, t_m) \in \calD \text{,} \ \ \pi_i \in \Pi
    	\text{,}
    \\
    \lbc(\theta) &\triangleq \frac{1}{N} \sum_{n=1}^{N}
    	\Big( u_{\theta}(x_n,t_n) - h(x_n, t_n) \Big)^2
    	\text{,} &&(x_n, t_n) \in \calB
    	\text{.}
\end{alignat*}
We write~$u_\theta(\pi)(x,t)$ to emphasize that we evaluate~$u_\theta(\pi)$, which approximates the solution of~\eqref{eq:BVP} with coefficients~$\pi$, at~$(x,t) \in \calD$. In practice, the input of the model~$u_\theta$ is simply~$(x,t,\pi) \in \calD \times \Pi$.
The losses~$\lpde$ and~$\lbc$ promote the requirements in~(PDE) and (IC)--(BC) from~\eqref{eq:BVP}, respectively, although the former is computed using automatic differentiation rather than discretization. Though the majority of PINNs target a single BVP, i.e., $I = 1$ in~\eqref{eq:PINN}, their extension to parameterized families of BVPs has been explored~\citep{Cho24p}.

\smallskip
\textbf{Supervised methods.}
Rather than directly solving the BVP, these approaches fit the model~$u_\theta$ to a set of (partial)~solutions~$u^\dagger_n$ of~\eqref{eq:BVP}. In this setting, $u_\theta$ is often a neural operator~(NO) capable of handling infinite-dimensional~(functional) inputs and outputs, such as forcing functions~$\tau$ and ICs~$h(x,0)$. Given, e.g., forcing-solution pairs~$(\tau_j,u^\dagger_j)$, these NOs are trained by solving
\begin{prob}\label{eq:NO}
    \minimize_{\theta \in \Theta}\ \frac{1}{J} \sum_{j=1}^{J}
    	\big\| u_{\theta}(\tau_j) - u^\dagger_j \big\|_{L_2(\calD)}^2
    	\text{.}
\end{prob}
In practice, the functions~$\tau_n$ and~$u^\dagger_n$ are discretized~(in the time or spectral domain) to enable computations~\citep{Li21f, Lu21l, Hao23g, Wei23s}. While it is not uncommon to combine~\eqref{eq:PINN} and~\eqref{eq:NO}, these semi-supervised methods typically rely on MLPs~\citep{Raissi19p, Lu21p}, since it can be challenging to evaluate~$D[u_{\theta}]$ for NOs~\citep{Li24p}.

\smallskip
\textbf{Limitations.}
Though effective in many applications, these methods are very sensitive to their hyperparameters. Indeed, the choice of collocation points~$(x,t)$, PDE coefficients~$\pi_i$, and weights~$\mu_\text{pde},\mu_\text{bc}$ affect both the quality and computational complexity of~\eqref{eq:PINN}~\citep{Krishnapriyan21c, Wight21s, Wang21u}. The same holds for the discretization of the objective in~\eqref{eq:NO}~\citep{Li21f, Hao23g, Wei23s}.
Supervised methods face the additional challenge that acquiring the PDE solutions~$\{u^\dagger_n\}$ used in~\eqref{eq:NO} can be expensive~(relying on, e.g., classical methods) and it is challenging to obtain good performance from small datasets. This issue is aggravated by the heterogeneous difficulty of fitting each solution.
A variety of heuristics for collocation points~\citep{Nabian21e, Daw23m, Penwarden23a}, weights~\citep{Wang21u, Maddu22i, McClenny23s}, and PDE solutions~\citep{Pestourie23p, Musekamp24a} have been put forward to mitigate these challenges. Yet, they generally focus on specific ``failure modes'' or BVPs and seldom address the non-trivial interactions of these yperparameters, limiting their effectiveness.

\section{Science-Constrained Learning}
\label{sec:science_constrained_learning}

In this section, we argue that the challenges faced by previous NN-based BVP solvers arise not because of how~\eqref{eq:PINN} and~\eqref{eq:NO} are solved, but because they are not the appropriate problems to solve in the first place. To do so, we show that obtaining a (weak)~solution of~\eqref{eq:BVP} is equivalent to solving a \emph{constrained learning problem with worst-case losses}. Hence, it is not enough to use (approximations of)~worst-case losses as in, e.g., \citep{Wang22i, Daw23m}, \emph{or} adapting loss weights as in, e.g., \citep{Wang21u, Lu21p, McClenny23s}.
Building on this result, we show how to incorporate other forms of scientific knowledge that are not \emph{mechanistic}~(i.e., PDEs) without resorting to specialized models, including \emph{structural} information~(e.g., invariances) and \emph{observations}~(measurements, simulations) of the solution.
The resulting \emph{science-constrained learning}~(SCL) problem accommodates a variety of knowledge settings, from unsupervised to supervised, and is amenable to a practical algorithm capable of effectively tackling entire families of BVPs and extrapolating solutions from existing ones~(Sec.~\ref{sec:algorithm} and~\ref{sec:experiments}).

In the remainder of this paper, we use~$u_\theta$ to refer to any parameterized model~(MLP, NO, etc.). For clarity, we derive our results for a single BVP instance, omitting the dependence on $\pi$ and/or~$\tau$. We consider these extensions at the end of the section.

\paragraph{Mechanistic~(PDE) knowledge.}
\label{sec:mechanistic}

We begin by showing how weak solutions of~\eqref{eq:BVP} can be obtained using constrained learning. To do so, we relax the BCs of~\eqref{eq:BVP} to relate the weak formulation~\eqref{eq:weak_form} to a distributionally robust constraint. The BCs are then reintroduced using a constrained formulation. We start with the following proposition, where~$W^{k,p}$ refers to the $(k,p)$-th order Sobolev space~(see App.~\ref{app:weak_formulation_to_statistical_problem}) and~$\calP^2(\calS)$ denotes the space of square-integrable probability distributions supported on~$\calS$.

\begin{proposition}\label{thm:weak_formulation_to_statistical_problem}
Let~$u^\dagger \in W^{k^\prime,2}(\calD)$, where $k^\prime \geq 1$ is the degree of the differential operator~$D$, be such that
$\sup_{\psi \in \mathcal{P}^2(\calD)} \E_{(x, t) \sim \psi} \!\Big[
   	\big(D [ u^\dagger ](x,t) - \tau(x, t) \big)^2
\Big]  = 0$.
If the dimension~$d$ of~$\Omega$ satisfies~$d \leq 4 k^\prime - 1$ , then~$u^\dagger$ satisfies~\eqref{eq:weak_form} with~$\calT = W^{k^\prime,2}(\calD)$.
\end{proposition}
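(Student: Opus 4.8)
The plan is to show that the worst-case constraint $\sup_{\psi \in \mathcal{P}^2(\calD)} \E_{(x,t)\sim\psi}\big[(D[u^\dagger](x,t) - \tau(x,t))^2\big] = 0$ forces the residual $g := D[u^\dagger] - \tau$ to vanish almost everywhere on $\calD$; once that is in hand, \eqref{eq:weak_form} with $\calT = W^{k^\prime,2}(\calD)$ follows immediately. The role of the dimension bound $d \leq 4k^\prime - 1$ will be to ensure that $g$ and every admissible test function are square-integrable over $\calD$, so that all integrals below are well-defined.

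First I would establish $g \in L^2(\calD)$. Since $\calD = \Omega \times (0,T] \subset \setR^{d+1}$ has a Lipschitz boundary, the Sobolev embedding theorem gives $W^{k^\prime,2}(\calD) \hookrightarrow L^q(\calD)$ for all $q \leq 2(d+1)/(d+1-2k^\prime)$ when $2k^\prime < d+1$, and for all finite $q$ when $2k^\prime \geq d+1$. A short computation shows $q = 4$ lies in this range exactly when $d + 1 \leq 4k^\prime$, i.e.\ under the hypothesis $d \leq 4k^\prime - 1$, so $W^{k^\prime,2}(\calD) \hookrightarrow L^4(\calD)$. Consequently $u^\dagger$ and its derivatives of order at most $k^\prime - 1$ lie in $L^4(\calD)$, while its derivatives of order $k^\prime$ lie in $L^2(\calD)$; since $D$ has order $k^\prime$ and --- as for the PDEs considered here --- at most quadratic nonlinearity with bounded coefficients, Hölder's inequality gives $D[u^\dagger] \in L^2(\calD)$, and hence $g \in L^2(\calD)$ using $\tau \in L^2(\calD)$.

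Next I would use the worst-case constraint to pin down $g$ pointwise. For any measurable $A \subseteq \calD$ with $|A| > 0$, the uniform distribution $\psi_A$ on $A$ has density $|A|^{-1}\mathbf{1}_A \in L^2(\calD)$ (because $|A| \leq |\calD| < \infty$), so $\psi_A \in \mathcal{P}^2(\calD)$, and the hypothesis forces $0 = \E_{(x,t)\sim\psi_A}[g^2] = |A|^{-1}\int_A g^2 \, dx\, dt$. Since $A$ was arbitrary, $g = 0$ a.e.\ on $\calD$, i.e.\ $D[u^\dagger] = \tau$ almost everywhere. Finally, every $\varphi \in W^{k^\prime,2}(\calD) \hookrightarrow L^4(\calD) \subseteq L^2(\calD)$ is square-integrable, so Cauchy--Schwarz makes both sides of \eqref{eq:weak_form} finite, and the a.e.\ identity $D[u^\dagger] = \tau$ yields $\int_\calD D[u^\dagger]\varphi\,dx\,dt = \int_\calD \tau\varphi\,dx\,dt$ for all such $\varphi$; that is, $u^\dagger$ satisfies \eqref{eq:weak_form} with $\calT = W^{k^\prime,2}(\calD)$.

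The delicate step is the first one. Translating the abstract hypotheses into $D[u^\dagger] \in L^2(\calD)$ is precisely where the restriction on $d$ (via the $L^4$ Sobolev embedding) and the structural assumptions on $D$ are genuinely needed: if $d$ were larger, $D[u^\dagger]$ could lie only in $L^p(\calD)$ for some $p < 2$, the worst-case expectation against densities in $L^2$ could be infinite, and the constraint would lose its meaning. Everything else --- the ``uniform-density'' argument forcing $g = 0$ a.e.\ and the verification of \eqref{eq:weak_form} --- is routine.
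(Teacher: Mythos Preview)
Your core argument---using uniform distributions $\psi_A \in \calP^2(\calD)$ to force $g := D[u^\dagger]-\tau = 0$ a.e., then reading off \eqref{eq:weak_form}---is correct but takes a genuinely different route from the paper. The paper never passes through the pointwise statement; instead, for each test function $\varphi \in W^{k',2}(\calD)$ it forms the density $\psi \propto \varphi^2$, invokes the Gagliardo--Nirenberg inequality (this is where $d \leq 4k'-1$ enters) to show $\varphi \in L^4(\calD)$ and hence $\psi \in \calP^2(\calD)$, and then bounds $\big[\int_\calD g\,\varphi\big]^2 \leq Z_\psi\,\E_\psi[g^2] = 0$ directly. So the two proofs deploy the dimension bound for entirely different purposes: the paper applies it to the \emph{test functions}, you apply it to the \emph{residual}.

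Your step~1, however, is both flawed and unnecessary. Flawed: the embedding $W^{k',2}(\calD)\hookrightarrow L^4(\calD)$ gives $u^\dagger \in L^4$, but it does \emph{not} place derivatives of order $j \leq k'-1$ in $L^4$; for that you would need $W^{k'-j,2}\hookrightarrow L^4$, i.e.\ $d+1\leq 4(k'-j)$, which is strictly stronger than $d\leq 4k'-1$ once $j\geq 1$ (e.g.\ $k'=2$, $d=7$, $j=1$). The ``at most quadratic nonlinearity'' assumption is also extraneous to the proposition. Unnecessary: since $g^2\geq 0$, the quantity $\E_{\psi_A}[g^2]=|A|^{-1}\int_A g^2$ is a well-defined element of $[0,\infty]$ for any measurable $g$, and the hypothesis $\sup_\psi \E_\psi[g^2]=0$ already forces it to vanish---no prior integrability of $g$ is required. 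In fact your steps~2--3 alone establish the conclusion without invoking $d\leq 4k'-1$, which reveals that in the paper's argument the bound serves solely to certify $\varphi^2/\|\varphi\|_{L^2}^2 \in \calP^2$, not to control $g$.
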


A proof is provided in Appendix~\ref{app:weak_formulation_to_statistical_problem}. The equality constraint suggested by Prop.~\ref{thm:weak_formulation_to_statistical_problem} enforces~\eqref{eq:weak_form}, but does not impose the BCs of~\eqref{eq:BVP}. Since they must hold for all~$(x,t) \in \calB$, it is more appropriate to incorporate them using a worst-case loss, namely, $\sup_{(x,t) \in \calB} ( u_\theta(x,t) - h(x, t) )^2$, rather than an average loss as in~\eqref{eq:PINN}. As long as~$(x,t) \mapsto ( u_\theta(x,t) - h(x, t) )^2$ is a function in~$L^2$, this is equivalent to a distributionally robust loss similar to the one from Prop.~\ref{thm:weak_formulation_to_statistical_problem}~(see Appendix~\ref{app:weak_formulation_to_statistical_problem}). We therefore conclude that a weak solution of~\eqref{eq:BVP} is obtained by solving
\begin{prob}\label{eq:PINN_Linf}
	\minimize_{\theta \in \Theta}&
	&&\sup_{\psi \in \mathcal{P}^2(\calB)} \E_{(x, t) \sim \psi} \!\Big[
		\big( u_\theta(x,t) - h(x, t) \big)^2
	\Big]
	\\
	\subjectto& &&\sup_{\psi \in \mathcal{P}^2(\calD)} \E_{(x, t) \sim \psi} \!\Big[
	\big( D [ u_\theta ](x,t) - \tau(x, t) \big)^2
	\Big] \leq \epsilon
	\text{,}
\end{prob}
where~$\epsilon \geq 0$ controls the trade-off between fitting the PDE and the BCs when~$u_\theta$ is not expressive enough to satisfy both.

Prop.~\ref{thm:weak_formulation_to_statistical_problem} elucidates the challenges arising from the choice of collocation points in the unsupervised approach~\eqref{eq:PINN}, most notably PINNs. Indeed, it is not enough to use a fixed distribution~(e.g., uniform): satisfying~\eqref{eq:weak_form} requires training against all distributions~$\psi \in \calP^2$. The use of worst-case losses in a constrained formulation is what makes~\eqref{eq:PINN_Linf} considerably different from previous adaptive sampling methods and loss-weighting schemes. In fact, contrary to previous approaches, by (approximately) solving~\eqref{eq:PINN_Linf} (as detailed in Sec.~\ref{sec:algorithm}), we indeed (approximately) solve~\eqref{eq:BVP}. At the same time, Prop.~\ref{thm:weak_formulation_to_statistical_problem} establishes a limitation of learning-based solvers by restricting the smoothness of their solutions~(essentially, solutions in~$W^{(d+1)/4,2}$). This can be an issue for large-scale dynamical systems, such as those found in smart grid applications, or when transforming higher-order PDEs in higher-dimensional first-order systems. While Prop.~\ref{thm:weak_formulation_to_statistical_problem} describes a sufficient condition for~$u^\dagger$ to satisfy~\eqref{eq:weak_form}, a necessary condition can be obtained by restricting~$\calP^2$ to sufficiently smooth distributions, namely, those belonging to a Sobolev space.

\paragraph{Structural knowledge.}

The constrained form of~\eqref{eq:PINN_Linf} suggests that other information can be incorporated as long as they can be formulated as learning objectives, i.e., statistical losses. This is the case of certain forms of structural knowledge. Indeed, it is often possible to obtain information about the structure of the solution of a BVP, such as invariances or symmetries, without explicitly solving it~\citep{Olver79s, Akhound-Sadegh23l}. While this structural information is already encoded in~\eqref{eq:PINN_Linf}, using it explicitly can reduce training time as well as the number of both collocation points and/or observations~[as in~\eqref{eq:NO}] needed. It also helps ensure the physical validity of outcomes by explicitly avoiding degenerate solutions, a common failure mode of unsupervised methods such as PINNs~\citep{Krishnapriyan21c}~(we do not observe such issues with~\eqref{eq:PINN_Linf}, see Sec.~\ref{sec:pinn_experiments}). Structural knowledge can also be used to remove solution ambiguities, e.g., for the eikonal equation that is invariant to the sign of the solution~(see App.~\ref{app:pdes}).

While structural knowledge can sometimes be incorporated into the model~$u_\theta$, e.g., using equivariant architectures~\citep{Cohen16g, Batzner22e}, it can also be imposed as a worst-case constraint. This is convenient for when such models are intricate to design. Consider, for example, a (finite)~invariance group~$\calG$ whose elements~$\gamma_i$ act on the domain~$(x,t) \in \calD$ such that~$u^\dagger(x,t) = u^\dagger[\gamma_i(x,t)]$, where~$u^\dagger$ is a solution of~\eqref{eq:BVP}. This invariance can be enforced by
\begin{equation}\label{eq:adversarial_invariance_loss}
	\sup_{\psi \in \mathcal{P}^2(\calD)} \E_{(x, t) \sim \psi} \!\Big[
		\big( u_\theta(x, t) - u_\theta[\gamma_i (x,t)] \big)^2
	\Big]
	\leq \epsilon
		\text{,} \quad \gamma_i \in \calG
		\text{.}
\end{equation}
Notice that we use the same distributionally robust formulation as for the BCs in~\eqref{eq:PINN_Linf}. Similar constraints can be constructed for other structures, such as equivariance. In contrast to~\eqref{eq:PINN_Linf}, where we want~$\epsilon \approx 0$, it can be beneficial to use a larger values in~\eqref{eq:adversarial_invariance_loss} to accommodate models~$u_\theta$ that cannot fully capture the solution invariances.

\paragraph{Observational knowledge.}

In addition to mechanistic~(i.e., PDEs) and structural knowledge, we also consider (partial, noisy)~observations of the BVP solution, obtained either via classical methods~(e.g., FEM) or real-world measurements. Although this type of information is commonly associated with NOs, seen as they are typically trained in a supervised manner as in~\eqref{eq:NO}, it is not limited to that architecture. Given observations~$u^\dagger_j$, $j=1,\dots,J$, of solutions of~\eqref{eq:BVP}, we may formulate constraints of the kind
\begin{equation}\label{eq:observational}
\begin{aligned}
	\E_{(x, t) \sim \fkm} \!\Big[
		\big( u_{\theta}(x, t) - u_j^\dagger(x, t) \big)^2
	\Big] &\leq \epsilon
		\text{,} \quad j = 1,\dots,J
		\text{,}
\end{aligned}
\end{equation}
where~$\fkm$ is some distribution~(typically uniform) of points on~$\calD$. Note that~\eqref{eq:observational} is simply a different way of writing the~$L^2$-norm from the objective of~\eqref{eq:NO}. However, rather than averaging~$L^2$ losses, \eqref{eq:observational} constraints the maximum error across data points. By considering each sample individually, it accounts for the heterogeneous difficulty of fitting them and enables the tolerance~$\epsilon$ to be adjusted individually for each observation, e.g., using larger values for noisier samples.

\paragraph{Science-constrained learning.}

Combining~\eqref{eq:PINN_Linf} with~\eqref{eq:adversarial_invariance_loss} and~\eqref{eq:observational}, we are able to formulate a general SCL problem accommodating all knowledge sources considered so far. Explicitly,
\begin{prob}[\textup{SCL}]\label{eq:scl_problem}
	\minimize_{\theta \in \Theta}&
	&&\sup_{\psi \in \mathcal{P}^2(\calB)} \E_{(x, t) \sim \psi} \!\Big[
		\big( u_\theta(x,t) - h(x, t) \big)^2
	\Big]
		&&&\text{(M)}
	\\
	\subjectto& && \sup_{\psi \in \mathcal{P}^2(\calD)} \E_{(x, t) \sim \psi} \!\Big[
		\big( D [ u_\theta ](x,t) - \tau(x, t) \big)^2
	\Big] \leq \epsilon_\text{pde}
		&&&\text{(M)}
	\\
	&&&\sup_{\psi \in \mathcal{P}^2(\calD)} \E_{(x, t) \sim \psi} \!\Big[
		\big( u_\theta(x, t) - u_\theta[\gamma_i (x,t)] \big)^2
	\Big] \leq \epsilon_\text{s}
	\text{,} \quad \gamma_i \in \calG
		&&&\text{(S)}
	\\
	&&&\E_{(x, t) \sim \fkm} \!\Big[
		\big( u_{\theta}(x, t) - u_j^\dagger(x, t) \big)^2
	\Big] \leq \epsilon_\text{o}
		\text{,} \quad j = 1,\dots,J
		\text{.}
		&&&\text{(O)}
\end{prob}
Note that any subset of the constraints in~\eqref{eq:scl_problem} can be used depending on the available information. The objective of~\eqref{eq:scl_problem} is also not restricted to the BCs and can be replaced by any of the other terms. In fact, \eqref{eq:scl_problem} can be formulated without an objective, i.e., as a feasibility problem.

It is straightforward to extend~\eqref{eq:scl_problem} to simultaneously solve a parameterized family of BVPs. However, rather than discretizing the parameter space as in~\eqref{eq:PINN}, we rely on a worst-case formulation that considers all of its possible values rather than only a finite subset. Explicitly, we rewrite~\eqref{eq:scl_problem} as
\begin{equation}\tag{\textup{SCL${}^\prime$}}\label{eq:scl_parametric}
\begin{alignedat}{7}
	\minimize_{\theta \in \Theta}&\ \ %
	&&\sup_{\psi \in \mathcal{P}^2}\ \E_{(x, t, \pi) \sim \psi,\,\tau \sim \fkp} \!\Big[
	\big( u_\theta(\pi,\tau)(x,t) - h(x, t) \big)^2
	\Big]
	&&&\text{(M)}
	\\
	\subjectto& && \sup_{\psi \in \mathcal{P}^2}\ \E_{(x, t, \pi) \sim \psi,\,\tau \sim \fkp} \!\Big[
	\big( D_\pi [ u_\theta(\pi,\tau) ](x,t) - \tau(x, t) \big)^2
	\Big] \leq \epsilon_\text{pde}
	&&&\text{(M)}
	\\
	&&& \sup_{\psi \in \mathcal{P}^2}\ \E_{(x, t, \pi) \sim \psi,\,\tau \sim \fkp} \!\bigg[
	\Big( u_\theta(\pi,\tau)(x, t) - u_\theta(\pi,\tau)\big[\gamma_i(\pi) (x,t)\big] \Big)^2
	\bigg] \leq \epsilon_\text{s}
		\text{, } \gamma_i \in \calG
	\quad\,&&&\text{(S)}
	\\
	&&&\E_{(x, t) \sim \fkm} \!\Big[
	\big( u_{\theta}(\pi_j, \tau_j)(x, t) - u_j^\dagger(x, t) \big)^2
	\Big] \leq \epsilon_\text{o}
	\text{,} \quad j = 1,\dots,J
	\text{,}
	&&&\text{(O)}
\end{alignedat}
\end{equation}
where~$\fkm,\fkp$ are fixed distributions~(e.g., uniform), $u_j^\dagger$ is a solution of~\eqref{eq:BVP} with coefficients~$\pi_j$ and forcing function~$\tau_j$, and the invariance~$\gamma_i$ is now parametrized to account for the fact that its action may depend on~$\pi$~(e.g., translation invariance with different strides). Note that the~$\psi$ are now supported on~$\calB \times \Pi$ or~$\calD \times \Pi$, which we omit in~\eqref{eq:scl_parametric} for clarity. Hence, they target not only BVPs~(parameters $\pi$) that are hard to fit, but also the regions of the domain responsible for this difficulty, enabling performances that would require fine discretizations~(see Sec.~\ref{sec:experiments}). Yet, this approach is not directly applicable to infinite-dimensional parameters~(e.g.,~$\tau$) as it requires sampling from a function space. We leave this extension for future work, considering here a fixed distribution~$\fkp$.

In the next section, we develop a practical algorithm to tackle~\eqref{eq:scl_problem} and~\eqref{eq:scl_parametric} by (i)~leveraging non-convex duality results from constrained learning~\citep{Chamon20p, Chamon23c} and (ii)~deriving explicit approximations of the suprema over~$\psi$ from which we can sample efficiently.

\section{Algorithm}
\label{sec:algorithm}

To develop a practical algorithm for~\eqref{eq:scl_problem}~[and~\eqref{eq:scl_parametric}], we need to overcome the fact that it is (i)~a non-convex constrained optimization problem involving (ii)~worst-case losses. A typical approach to~(i) is to combine the all losses as penalties into a single training objective as in~\eqref{eq:PINN}. Though penalties and constraints are essentially equivalent in convex optimization~(strong duality, \citep{Bertsekas09c}), this is not the case in the non-convex setting of~\eqref{eq:scl_problem}. Hence, regardless of how the weights~$\mu$ in~\eqref{eq:PINN} are adapted~(e.g., \cite{Wang21u, Wight21s, Lu21p, Basir22p}), it need not provide a solution of~\eqref{eq:scl_problem}.

We overcome this issue by first tackling~(ii) using the following proposition:
\begin{proposition}\label{thm:robust_loss}
	Let~$z \mapsto \ell(z) \in L^2$. Then, for all~$\delta > 0$ there exists~$\alpha < \sup_z \ell(z)$ such that
	$\sup_{\psi \in \calP^2}\ \E_{z \sim \psi} \!\big[ \ell(z) \big]
		\leq \E_{z \sim \psi_{\alpha}} \!\big[ \ell(z) \big] + \delta
	$,
	where~$\calP^2 \ni \psi_{\alpha}(z) \propto \big[ \ell(z) - \alpha \big]_+$ for~$[a]_+ = \max(0, a)$.
\end{proposition}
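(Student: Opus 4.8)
The plan is to recognize the left-hand side as an essential supremum and then to show that $\psi_\alpha$ realizes it up to $\delta$ once $\alpha$ is chosen close enough to that value. First I would set $M := \operatorname{ess\,sup}_z \ell(z)$ with respect to the reference (Lebesgue) measure defining $\calP^2$; this is finite in every setting considered here, since the relevant $\ell$ are bounded residuals of continuous functions on a compact domain. The key first claim is that $\sup_{\psi \in \calP^2} \E_{z\sim\psi}[\ell(z)] = M$. The upper bound is immediate: $\ell(z) \le M$ a.e., so $\E_{z\sim\psi}[\ell(z)] \le M$ for every $\psi \in \calP^2$. For the matching lower bound, for each $\beta < M$ the superlevel set $A_\beta = \{z : \ell(z) > \beta\}$ has positive and finite measure, so the uniform density on $A_\beta$ belongs to $\calP^2$ (its squared $L^2$ norm is $1/|A_\beta| < \infty$) and has expected loss at least $\beta$; letting $\beta \to M$ gives the identity.

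Next I would verify that $\psi_\alpha(z) \propto [\ell(z) - \alpha]_+$ is a legitimate element of $\calP^2$ whenever $\alpha < M$. Its normalizing constant $Z_\alpha = \int [\ell(z)-\alpha]_+\,dz$ is strictly positive because $\{\ell > \alpha\}$ has positive measure (as $\alpha < M$) and finite because the domain has finite measure and $\ell \in L^2 \subset L^1$; square-integrability of $\psi_\alpha$ follows from $0 \le [\ell(z) - \alpha]_+ \le |\ell(z)| + |\alpha|$, which lies in $L^2$ of a finite-measure domain. Since $\psi_\alpha$ is supported on $\{\ell > \alpha\}$, on which $\ell(z) > \alpha$ pointwise, one gets $\E_{z\sim\psi_\alpha}[\ell(z)] > \alpha$. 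To conclude, given $\delta > 0$ I would take $\alpha := M - \delta$; then $\alpha < M \le \sup_z \ell(z)$, and combining the two facts above, $\sup_{\psi\in\calP^2}\E_{z\sim\psi}[\ell(z)] = M = \alpha + \delta < \E_{z\sim\psi_\alpha}[\ell(z)] + \delta$, which is the claim.

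The argument is short, and the only care required is measure-theoretic bookkeeping: confirming that $\operatorname{ess\,sup}\ell$ is finite (automatic for the bounded losses appearing in the constrained-learning formulations of this paper), that the uniform densities on superlevel sets and $\psi_\alpha$ genuinely belong to $L^2$, and that $Z_\alpha$ is positive and finite for $\alpha < M$. I do not expect a substantive obstacle here. If a quantitative statement were desired, one could additionally observe that $\E_{z\sim\psi_\alpha}[\ell(z)] \to M$ as $\alpha \uparrow M$, since the support of $\psi_\alpha$ concentrates on the region where $\ell \approx M$, which makes the trade-off between $\alpha$ and $\delta$ explicit; this refinement is not needed for the stated proposition.
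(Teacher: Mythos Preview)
Your argument is correct and considerably more elementary than the paper's. You directly identify $\sup_{\psi\in\calP^2}\E_\psi[\ell]$ with the essential supremum~$M$, check that $\psi_\alpha\in\calP^2$ for $\alpha<M$, and observe that $\E_{\psi_\alpha}[\ell]>\alpha$ since its support lies in~$\{\ell>\alpha\}$; taking $\alpha=M-\delta$ finishes. The paper instead treats the problem variationally: it adds an auxiliary $L^2$-norm bound on~$\psi$, invokes strong duality for the resulting convex program, and uses Rockafellar's interchange of supremum and integration~(Thm.~14.60) to solve the inner problem pointwise, \emph{deriving} the truncated form $\psi_\alpha\propto[\ell-\alpha]_+$ as the optimal density with~$\alpha$ the Lagrange multiplier of the normalization constraint. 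Your route is shorter and requires no optimization machinery, but it only \emph{verifies} that the given $\psi_\alpha$ works; the paper's duality argument explains \emph{why} $\psi_\alpha$ has this form and ties~$\alpha,\gamma$ to dual variables, which is the viewpoint exploited later in the algorithm~(Sec.~4). Your caveat that $M<\infty$ is needed is well placed: it is in fact required for the proposition itself (if $M=\infty$ then the left side is infinite while $\E_{\psi_\alpha}[\ell]\le\|\ell\|_{L^2}\|\psi_\alpha\|_{L^2}/Z_\alpha<\infty$), and the paper silently assumes it too.
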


A proof based on~\cite{Robey21a} can be found in App.~\ref{app:weak_formulation_to_statistical_problem}. Prop.~\eqref{thm:robust_loss} shows that the worst-case losses in~\eqref{eq:scl_problem}/\eqref{eq:scl_parametric} can be approximated arbitrarily well by an expectation with respect to~$\psi_\alpha$, a distribution proportional to a truncation of the underlying loss. For clarity, we consider~\eqref{eq:scl_problem} with only constraint~(M), but similar manipulations hold for the constraints~(S) and~(O) as well as~\eqref{eq:scl_parametric}. Explicitly, \eqref{eq:scl_problem}(M) can be written as
\begin{prob}\label{eq:scl_statistical}
	\minimize_{\theta \in \Theta}&
		&&\E_{(x, t) \sim \psi_\alpha^\text{bc}} \!\Big[
			\big( u_\theta(x,t) - h(x, t) \big)^2
		\Big]
	\\
	\subjectto& &&\E_{(x, t) \sim \psi_\alpha^\text{pde}} \!\Big[
		\big( D [ u_\theta ](x,t) - \tau(x, t) \big)^2
	\Big] \leq \epsilon
		\text{,}
\end{prob}
for~$\psi_\alpha^\text{bc}(x,t) \propto \big[
	\big( u_\theta(x,t) - h(x, t) \big)^2 - \alpha
\big]_+$
and $\psi_\alpha^\text{pde}(x,t) \propto\big[
	\big( D [ u_\theta ](x,t) - \tau(x, t) \big)^2 - \alpha
\big]_+$
supported on~$\calB$ and~$\calD$ respectively.

Observe that~\eqref{eq:scl_statistical} now has the form of a constrained learning problem, i.e., a constrained optimization problem with statistical losses. We can therefore use non-convex duality results from~\citep{Chamon20p, Chamon23c, Elenter24n} to show that, under typical conditions from (unconstrained)~learning theory and for rich enough parametrization, its solution can be approximated by solving the \emph{empirical dual problem}
\begin{prob}[\textup{DIV}]\label{eq:scl_dual}
	\maximize_{\lambda \geq 0}\ \min_{\theta \in \Theta}\ \hat{L}(\theta, \lambda)
	\text{,}
\end{prob}
where~$\hat{L}$ is the \emph{empirical Lagrangian} of~\eqref{eq:scl_statistical} based on samples~$(\data{bc}) \sim \psi_\alpha^\text{bc}$ and~$(\data{pde}) \sim \psi_\alpha^\text{pde}$, namely,
\begin{equation}\label{eq:lagrangian}
\begin{aligned}
	\hat{L}(\theta, \lambda) &\triangleq
		\frac{1}{N_\text{bc}} \sum_{n=1}^{N_\text{bc}}
			\big( u_{\theta}(\data{bc}) - h(\data{bc}) \big)^2
	\\
	{}&+ \lambda \Bigg[
		\frac{1}{N_\text{pde}} \sum_{n=1}^{N_\text{pde}} \big(
		D [ u_{\theta} ](\data{pde}) - \tau(\data{pde})
		\big)^2
		- \epsilon
	\Bigg]
		\text{.}
\end{aligned}
\end{equation}
Contrary to previous approaches based on~\eqref{eq:PINN}, such as~\citep{Wang21u, Wight21s, Daw23m}, \eqref{eq:scl_dual} truly approximates the solution of~\eqref{eq:BVP}. Indeed, \citep[Thm.~1]{Chamon23c} and~\citep[Thm.~3.1]{Elenter24n} guarantee that solutions of~\eqref{eq:scl_dual} are near-optimal and near-feasible for~\eqref{eq:scl_statistical} and, in view of~Prop.~\ref{thm:weak_formulation_to_statistical_problem} and~\eqref{thm:robust_loss}, \eqref{eq:BVP}~(see App.~\ref{app:generalization_results} for details). It is worth noting that this is only possible because~\eqref{eq:scl_statistical} is a \emph{statistical} problem. Although similar Lagrangian formulations have been used in~\cite{Lu21p, Basir22p}, they deal with deterministic constrained problem~(fixed collocation points) for which this duality does not hold.

From a practical perspective, \eqref{eq:scl_dual} does not require extensive hyperparameter tuning~[such as~$\mu$ in~\eqref{eq:PINN}], seen as~$\lambda$ is an optimization variable. What is more, despite non-convexity, the duality between~\eqref{eq:scl_statistical} and~\eqref{eq:scl_dual} allows the solution~$\lambda^\star$ of~\eqref{eq:scl_dual} to be interpreted as a sensitivity of the objective~(BC residuals) to small relaxations of~$\epsilon$~\citep{Chamon20p, Chamon23c, Hounie23r}. This information can be used to evaluate the fit of noisy measurements in~\eqref{eq:scl_problem}(O) or the reliability of solutions for different parameters~$\pi$ in~\eqref{eq:scl_parametric}~(see Sec.~\ref{sec:experiments}).
Finally, \eqref{eq:scl_dual} is amenable to practical algorithms such as dual ascent, which updates~$\lambda_0 = 0$ as
\begin{equation}\label{eq:dual_ascent}
	\lambda_{k+1} = \lambda_{k} + \eta \Bigg[
		\frac{1}{N_\text{pde}} \sum_{n=1}^{N_\text{pde}} \big(
			D [ u_{\theta^\dagger_k} ](\data{pde}) - \tau(\data{pde})
		\big)^2 - \epsilon
	\Bigg]_+
	\!\!\text{,}\ \ \text{for}\ %
		\theta^\dagger_k \in \argmin_{\theta \in \Theta}\ \hat{L}(\theta, \lambda_{k})
		\text{.}
\end{equation}
Even if the empirical Lagrangian minimizer~$\theta^\dagger_k$ is only computed approximately, \eqref{eq:dual_ascent} can be shown to converge to a neighborhood of a solution of~\eqref{eq:scl_dual}~\citep{Chamon23c, Elenter24n}.

Still, to turn~\eqref{eq:dual_ascent} into a practical algorithm, we need to obtain samples from~$\psi_\alpha$, which is only known implicitly~[see~\eqref{eq:scl_statistical}]. Nevertheless, since the losses in~\eqref{eq:scl_statistical} are non-negative, the~$\psi_\alpha$ are smooth, fully-supported, square-integrable distributions for~$\alpha = 0$. They are therefore amenable to be sampled using Markov Chain Monte Carlo~(MCMC) techniques~\citep{Robert04m}. We use the Metropolis-Hastings~(MH) algorithm in our experiments since it avoids additional backward passes and higher-order derivatives resulting from differentiating~$D_\pi$ while still providing strong empirical results~(see App.~\ref{app:mh_sampling} and Sec.~\ref{sec:experiments}). Exploring first-order methods~(e.g., Langevin Monte Carlo) and algorithms adapted to discontinuous distributions~(e.g., \citep{Nishimura20d}) to enable faster convergence~(reduce~$N$) and better approximations~(increase~$\alpha$) is left for future work.

It is possible to replace~$\psi_\alpha$ by a fixed distribution~(e.g., uniform) or even fixed points for some of the constraints in~\eqref{eq:scl_problem}/\eqref{eq:scl_parametric} at negligible accuracy costs. We find this is consistently the case for the BC objective, although we emphasize that this is application-dependent. Additionally, certain architectures, such as FNOs~\citep{Li21f}, cannot make predictions at arbitrary points of the domain. It is then more appropriate to take~$\psi_\alpha$ to be a uniform distribution over equispaced points.

The resulting method for solving~\eqref{eq:scl_parametric} is summarized in Alg.~\ref{alg:primal_dual}. Note that rather than obtaining Lagrangian minimizers as in~\eqref{eq:dual_ascent}, Alg.~\ref{alg:primal_dual} alternates between optimizing for~$\theta$~(step~8) and~$\lambda$~(step~9). Such gradient descent-ascent schemes are commonly used in convex optimization~\citep{Arrow58s, Bertsekas15c}, although their convergence is less well understood in non-convex settings~\cite{Lin20n, Fiez21g, Yang22n}. The convergence guarantees for~\eqref{eq:dual_ascent} can be recovered by repeating step~8 ($\theta$-update) multiple times before updating~$\lambda$.
Note that Alg.~\ref{alg:primal_dual} does not rely on training heuristics, such as adaptive or causal sampling~\cite{Krishnapriyan21c, McClenny23s, Daw23m, Penwarden23a, Wang24r}, \emph{ad hoc} weight updates~\citep{Wang21u, Maddu22i}, and conditional updates~\citep{Lu21p, Basir22p}. Indeed, steps 4-7 are empirical estimates of expectations with respect to~$\psi_0$ that themselves approximate the worst-case losses in~\ref{eq:scl_parametric}~(Prop.~\ref{thm:robust_loss}). Steps 8-9 describe a traditional primal-dual (gradient descent-ascent) algorithm for solving problems such as~\eqref{eq:scl_dual}, which itself yields approximate solutions of~\eqref{eq:scl_parametric}~\citep{Chamon20p, Chamon23c, Elenter24n} and, consequently, \eqref{eq:BVP}~(Prop.~\ref{thm:weak_formulation_to_statistical_problem}).

\begin{algorithm}[tb]
\caption{Primal-dual method for \eqref{eq:scl_problem}}
\label{alg:primal_dual}
\begin{algorithmic}[1]

\State \textbf{Inputs}: Differential operator~$D_\pi$, invariant transformations~$\gamma_i \in \calG$, observations set~$(\pi_j,\tau_j,u^\dagger_j)$, parameterized model~$u_{\theta_0}$, and~$\dvar{pde}_0 = \dvar{s${}_i$}_0 = \dvar{o${}_j$}_0 = 0$

\For{$k = 1, \ldots, K$}
	\State $\displaystyle
		\loss{bc}_{k} = \frac{1}{N_\text{BC}} \sum_{n=1}^{N_\text{BC}}
			\Big( u_{\theta_k}(\pi^\text{bc}_n)(\data{bc}) - h(\data{bc}) \Big)^2
	$, \hfill $(\datapar{bc}) \sim \psi_0^\text{BC}$

	\State $\displaystyle
		\loss{pde}_{k} = \frac{1}{N_\text{pde}} \sum_{n=1}^{N_\text{pde}} \Big(
			D_{\pi^\text{pde}_n} \left[ u_{\theta_k}(\pi^\text{pde}_n) \right](\data{pde}) - \tau(\data{pde})
		\Big)^2
	$, \hfill $(\datapar{pde}) \sim \psi_0^\text{PDE}$

	\State $\displaystyle
		\loss{s${}_i$}_{k} = \frac{1}{N_\text{s}} \sum_{n=1}^{N_\text{s}} \Big(
			u_{\theta_k}(\pi^\text{s${}_i$}_n)(\data{s${}_i$})
			- u_{\theta_k}(\pi^\text{s${}_i$}_n)
				\big[ \gamma_i(\pi^\text{s${}_i$}_n) (\data{s${}_i$}) \big]
		\Big)^2
	$, \hfill $(\datapar{s${}_i$}) \sim \psi_0^\text{ST${}_i$}$

	\State $\displaystyle
		\loss{o${}_j$}_{k} = \frac{1}{N_\text{o}} \sum_{n=1}^{N_\text{o}} \Big(
			u_{\theta_k}(\pi_j,\tau_j) (\data{o${}_j$}) - u_j^\dagger(\data{o${}_j$})
			\Big)^2
	$, \hfill $(\data{o${}_j$}) \sim \fkm$

	\State $\displaystyle
	\theta_{k+1} = \theta_{k} - \eta_p \bigg[
		\nabla_\theta \loss{bc}_{k}
		+ \dvar{pde} \nabla_\theta \loss{pde}_{k}
		+ \sum_{i=1}^I \dvar{s${}_i$}_{k} \nabla_\theta \loss{s${}_i$}_{k}
		+ \sum_{j=1}^J \dvar{o${}_j$}_{k} \nabla_\theta \loss{o${}_j$}_{k}
	\bigg]$

    \State $\displaystyle
		\dvar{pde}_{k+1} = \big[
		\dvar{pde}_{k} + \eta_d (\loss{pde}_{k} - \epsilon_\text{pde})
	\big]_+$;
	$\displaystyle
		\dvar{s}_{k+1} = \big[
		\dvar{s}_{k} + \eta_d (\loss{s}_{k} - \epsilon_\text{s})
	\big]_+$;
	$\displaystyle
		\dvar{o${}_j$}_{k+1} = \big[
		\dvar{o${}_j$}_{k} + \eta_d (\loss{o${}_j$}_{k} - \epsilon_\text{o})
	\big]_+$
\EndFor
\end{algorithmic}
\end{algorithm}

\section{Experiments}
\label{sec:experiments}

In this section, we showcase the use of SCL by training MLPs and FNOs~\citep{Li21f} to solve six PDEs~(convection, reaction-diffusion, eikonal, Burgers', diffusion-sorption, and Navier-Stokes). We consider different subsets of constraints from~\eqref{eq:scl_problem}/\eqref{eq:scl_parametric} to illustrate a variety of knowledge settings, but train only the most suitable model in each case since our goal is to illustrate the natural uses of SCL rather than exhaust its potential. Detailed descriptions are provided in the appendices, including BVPs~(App.~\ref{app:pdes}), training procedures~(App.~\ref{app:experimental_details}), and further results~(App.~\ref{app:additional_experiments}).
Code to reproduce these experiments is available at \url{https://github.com/vmoro1/scl}.
In the sequel, we use fixed points~$(x,t)$ for the BC objective rather than~$\psi_0^\text{BC}$ to illustrate how computational complexity can be reduced without significantly affecting the results. We still use~$\psi_0^\text{BC}$ for~$\pi$.

\subsection{Solving a specific BVP}
\label{sec:pinn_experiments}

\begin{table}[b]
	\renewcommand{\arraystretch}{1.35}
	\caption{Relative $L_2$ error for solving specific BVPs (average $\pm$ standard deviation across 10 seeds).}
	\label{tab:results_scl_pinn}
	\vspace{-1mm}
	\begin{center}
	\begin{tabular}{ll|cc|c}
		&& 	PINN~\eqref{eq:PINN} 	& R3~\citep{Daw23m} 		& \eqref{eq:scl_problem}(M)
		\\\hline
		\multirow{2}{*}{\textbf{Convection}}
		& \textbf{$\beta = 30$} &
			$1.17 \pm 0.65 \,\%$	& $0.999 \pm 0.53 \,\%$		& $0.971 \pm 0.30 \,\%$
		\\
		& \textbf{$\beta = 50$} &
			$56.5 \pm 20 \,\%$		& $29.0 \pm 33 \,\%$		& $3.74 \pm 0.87 \,\%$
		\\\hline
		\multirow{2}{*}{\textbf{React.-Diff.}}
		& \textbf{$(\nu, \rho) = (3, 3)$} &
			$0.745 \pm 0.014 \,\%$        & $0.736 \pm 0.068 \,\%$   & $1.82 \pm 0.74 \,\%$
		\\
		& \textbf{$(\nu, \rho) = (3, 5)$} &
			$79.6 \pm 0.27 \,\%$          & $0.665 \pm 0.046 \,\%$   & $0.762 \pm 0.11 \,\%$
		\\\hline
		\multicolumn{2}{l|}{\textbf{Eikonal}} &
			$87.1 \pm 39 \,\%$            & $85.5 \pm 27 \,\%$       & $9.95 \pm 1.9 \,\%$
		\\\hline
	\end{tabular}
	\end{center}
	\label{tab:pinn}
\end{table}

We begin by solving~\eqref{eq:BVP} for fixed parameters~$\pi$, forcing function~$\tau$, and BCs. Though this may not be the best application for NN-based solvers~[see, e.g., \cite{McGreivy2024w}], it remains a valid demonstration. We use~\eqref{eq:scl_problem}(M) to train MLPs to solve convection, reaction-diffusion, and eikonal problems, comparing the results to PINNs~[i.e.,~\eqref{eq:PINN} with weights chosen as in~\citep{Daw23m}]. All experiments use~$1000$ collocation points per epoch obtained by (PINN)~sampling uniformly~(we find this performs better than using fixed points as in, e.g.,~\citep{Raissi19p, Lu21p}), (R3)~using the adaptive heuristic from~\citep{Daw23m}, and (SCL)~using MH after~$4000$ burn-in steps.
Table~\ref{tab:pinn} shows that SCL matches and often outperforms other methods, particularly in challenging scenarios~(e.g., convection with~$\beta = 50$ or non-linear eikonal). This is because it jointly adapts the weight of each loss and the points used to evaluate them during training~(see Alg.~\ref{alg:primal_dual}). Although R3 also targets points with high PDE residuals, it does not approximate the worst-case loss needed to guarantee a (weak)~solution~(Prop.~\ref{thm:weak_formulation_to_statistical_problem}). The improved performance of SCL comes at a higher computational cost~($\approx 5$x), although this is largely offset when solving parametric problems.

\subsection{Solving parametric families of BVPs}
\label{sec:param_sol_experiments}

A key advantages of NN-based approaches is their ability to solve entire families of BVPs at once. Consider fitting an MLP~$u_\theta(x,t,\beta)$ to solve convection problems for~$\beta \in [1,30]$ using~\eqref{eq:scl_parametric}(M) and~\eqref{eq:PINN} as in~\citet{Cho24p}. We do not use their tailored ``\ppinn'' architecture, although it would be compatible with SCL. The~$\pi_j = \beta_j$ are taken to be 4, 7, and 30 equispaced values in~$[1,30]$.
Fig.~\ref{fig:param_sol_convection_helmholtz}a shows that~\eqref{eq:PINN} can only achieve the error of~\eqref{eq:scl_parametric}(M) for the finest discretization, at which point it evaluates the PDE loss 6~times more per epoch. The same pattern holds for reaction-diffusion~(4--7 times) and 2D~Helmholtz~(4--5 times) problems~(App.~\ref{app:additional_experiments}).
This effectiveness is due to the worst-case loss of~\eqref{eq:scl_parametric}(M) jointly selecting~$(x,t,\beta)$ to target challenging coefficients as well as the domain regions responsible for that difficulty. In fact, inspecting~$\psi_0^\text{PDE}$ at different stages of training~(Fig.~\ref{fig:param_sol_convection_helmholtz}b) shows that SCL first fits the solution ``causally,'' focusing on smaller values of~$t$. While this has been proposed in~\citep{Krishnapriyan21c, Penwarden23a, Wang24r}, it arises naturally by solving~\eqref{eq:scl_parametric}(M). As training advances, however, Alg.~\ref{alg:primal_dual} shifts focus to fitting higher convection speeds~$\beta$. This occurs \emph{without} any prior knowledge of the problems or manual tuning.

As we have argued, this is a use case for which SCL is particularly well-suited. Indeed, consider solving a 2D~Helmholtz equation for parameters~$(a_1, a_2) \in [1,2]^2$ using an FEM solver with mesh size chosen to obtain a similar accuracy as SCL. Across~$100$ experiments, the average relative~$L_2$ error for the FEM solver was~$0.036$ for an average runtime of $4.3$~minutes per solution~(see App.~\ref{app:additional_experiments}). Hence, in the time it took to train the SCL model~($31.1$~hours), we could evaluate less than~$440$ parameters combinations using the FEM solver. This is 20 times less than the~$10^4$ combinations used to evaluate the error of~\eqref{eq:scl_parametric}(M)~(average error~$0.013$ for a runtime of $4$~minutes). It is worth noting that these numbers are for a highly-optimized FEM implementation~\citep{Baratta23d}.

\begin{figure}[tb]
	\centering
	\begin{minipage}{0.41\linewidth}
		\centering
		\includegraphics[width=\linewidth]{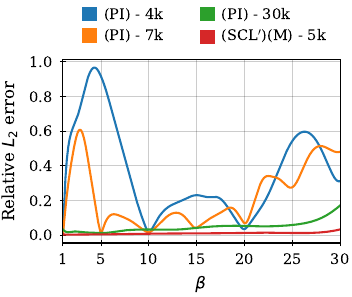}

		\hspace{7mm}{\small(a)}
	\end{minipage}
	\hfill
	\begin{minipage}{0.575\linewidth}
		\centering
		\includegraphics[width=\linewidth]{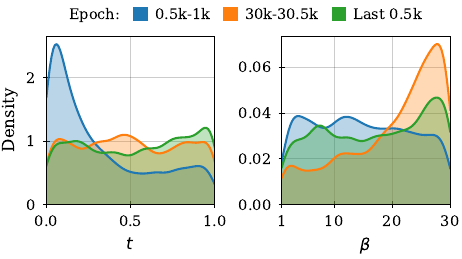}

		\hspace{7mm}{\small(b)}
	\end{minipage}
	\vspace{-1mm}
	\caption{Solving parametric convection BVPs:
		(a)~relative~$L_2$ error vs.~$\beta$~(legend reports number of differential operator evaluations per epoch). (b)~Samples from~$\psi_0^\text{PDE}$ at different training stages.}
	\label{fig:param_sol_convection_helmholtz}
\end{figure}

\subsection{Leveraging invariance when solving BVPs}
\label{sec:invariance_constraint_experiments}

Next, we showcase how SCL can be used to overcome computational limitations or scarce mechanistic knowledge. Consider training an MLP to solve a convection BVP with~$\beta = 30$ and periodic initial condition~$h(0,x) = \sin(x)$. This problem is commonly used to showcase a ``failure mode'' of~\eqref{eq:PINN} since it yields degenerate solutions when using fixed collocation points~(Fig.~\ref{fig:invariance_constraint}a)~\citep{Krishnapriyan21c}.
Note that the constrained~\eqref{eq:scl_problem}(M) also fails when using fixed collocations points~(Fig.~\ref{fig:invariance_constraint}b), even though its stochastic version finds accurate solutions~(Table~\ref{tab:pinn}).
One way to overcome this limitation is by leveraging additional knowledge. In this case, we know from the BVP structure that its solution must be periodic with period~$\pi/15$. By incorporating this invariance using~\eqref{eq:scl_problem}(S), we can obtain accurate solutions despite our use of fixed collocation points for~\eqref{eq:scl_problem}(M)~(Fig.~\ref{fig:invariance_constraint}c). Note that the worst-case loss in~\eqref{eq:scl_problem}(S) is fundamental to avoid degenerate solutions.

\begin{figure}[tb]
	\centering
	\includegraphics[width=\linewidth]{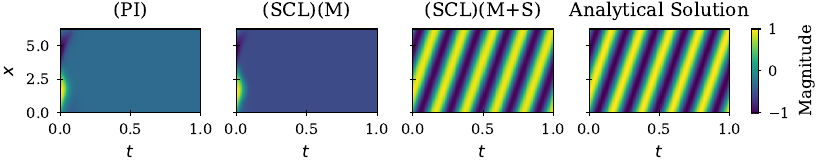}
	\caption{Using invariance in convection BVPs~(BC and PDE losses in~\eqref{eq:PINN} and~\eqref{eq:scl_problem} are evaluated using a \emph{fixed set of collocation points}).
	}
	\label{fig:invariance_constraint}
\end{figure}

\subsection{Supervised solution of BVPs}
\label{sec:pointwise_constrained_fno_experiments}

We conclude by exploring supervised methods that rely only on pairs of initial conditions~$h_j(x,0)$ and corresponding (weak)~solutions~$u^\dagger_j$. To showcase the versatility of SCL, we train FNOs rather than MLPs, fixing~$\fkm$ to a uniform distribution over a regular grid to accommodate their predictions. We also cast the SCL problem using only the observational constraints~\eqref{eq:scl_problem}(O), i.e., without any objective. In contrast to~\eqref{eq:NO}, which minimizes the average error, this formulation enforces a maximum error of~$\epsilon_\text{o}$ across samples. Though apparently minor, this leads to better prediction quality, as shown in Table~\ref{tab:results_NO}.
This occurs because the difficulty of fitting PDE solutions varies across ICs. While the average tends to emphasize the majority of ``easy-to-fit samples,'' enforcing a maximum error gives more weight to challenging data points. This becomes clear in Fig.~\ref{fig:dual_variables_burgers}, which compares the magnitude of each IC in the training set with its final dual variables~$\dvar{OB${}_j$}$ for the Burgers' equation. Immediately, we notice a trend where ICs with either small or large magnitudes appear harder to fit. This information can be leveraged to guide the collection of additional data points or improve the NO architecture. Indeed, any model improvement can immediately take advantage of SCL since it is (virtually)~independent of the choice of~$u_\theta$. The benefits of having~$\dvar{OB}$ are clear when predicting which IC is hard to fit is intricate, as is the case for the Navier-Stokes equation~(see App.~\ref{app:additional_experiments}).

\begin{figure}[tb]
		\begin{minipage}[c]{0.5\linewidth}
		\centering
		\captionof{table}{Relative $L_2$ error on test set (average across 10 seeds, see App.~\ref{app:additional_experiments} for standard deviation).}
		\label{tab:results_NO}
		\renewcommand{\arraystretch}{1.2}
		\begin{tabular}{lccc}
			& $\nu$ & \eqref{eq:NO} & \eqref{eq:scl_problem}(O)
			\\\hline
			\textbf{Burgers'}                       & $10^{-3}$ &$0.0540 \,\%$ & $0.0444 \,\%$
			\\\hline
			\multirow{3}{*}{\textbf{Navier-Stokes}} & $10^{-3}$ & $4.29 \,\%$ & $3.31 \,\%$  \\
													& $10^{-4}$ & $32.2 \,\%$ & $29.9 \,\%$  \\
													& $10^{-5}$ & $27.6 \,\%$ & $26.0 \,\%$
			\\\hline
			\multicolumn{2}{l}{\textbf{Diffusion-Sorption}} &$0.274 \,\%$ & $0.218 \,\%$
			\\\hline
		\end{tabular}
	\end{minipage}
	\hfill
	\begin{minipage}[c]{0.455\columnwidth}
		\centering
		\includegraphics[width=\columnwidth]{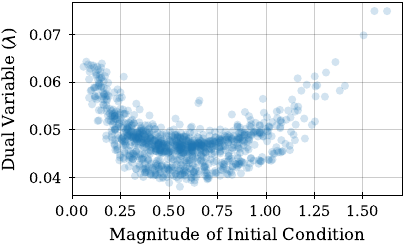}

		\caption{Final value of dual variables~($\dvar{OB${}_j$}$) vs.\ magnitude of IC for Burgers' equation.}
	\label{fig:dual_variables_burgers}
	\end{minipage}
\end{figure}

\section{Conclusion}

This paper developed SCL, a technique for solving BVPs based on constrained learning. It demonstrated that finding (weak)~solutions of PDEs is equivalent to solving constrained learning problems with worst-case losses, which also allows prior knowledge to be naturally incorporated to the solution of the BVP, e.g., structural constraints~(e.g., invariances), real-world measurements, and previously known solutions. It developed a practical algorithm to tackle SCL problems and showcased its performance across a variety of PDEs and NN~architectures. SCL not only yields accurate solutions, but also tackles many challenges faced by previous methods, such as extensive hyperparameter tuning, degenerate solutions, and fine discretizations of domain and/or coefficients. Future work includes exploring worst-case losses for functional parameters such as~$\tau$ and the use of SCL for active~learning.

\newpage
\subsubsection*{Acknowledgments}
This work was partly funded by the Deutsche Forschungsgemeinschaft (DFG, German Research Foundation) under Germany's Excellence Strategy (EXC 2075-390740016). It was performed in part on the HoreKa supercomputer funded by the Ministry of Science, Research and the Arts Baden-Württemberg and by the Federal Ministry of Education and Research. Viggo~Moro thanks the International Max Planck Research School for Intelligent Systems~(IMPRS-IS) for their support and G-Research for supporting the travel costs.

\bibliographystyle{iclr2025_conference}
\bibliography{aux_files/IEEEabrv.bib,aux_files/af.bib,aux_files/bayes.bib,aux_files/control.bib,aux_files/gsp.bib,aux_files/math,aux_files/ml.bib,aux_files/rkhs.bib,aux_files/rl.bib,aux_files/sp.bib,aux_files/stat.bib,aux_files/vm.bib}

\newpage
\appendix

\newpage
\section{Applications of~\eqref{eq:BVP}}
\label{app:pdes}

We showcase here a variety of phenomena that can be described using~\eqref{eq:BVP}. The PDEs detailed in this section are used throughout our experiments to illustrate the performance of SCL.

\subsection{Convection equation}

The one-dimensional convection equation models the transport of a scalar quantity~$u(x,t)$, such as temperature or concentration, along the spatial dimension~$x$. We consider the convection problem
\begin{subequations}\label{eq:convection}
\begin{align}
    \frac{\partial u(x,t)}{\partial t} + \beta \frac{\partial u(x,t)}{\partial x} &= 0,
    	& (x,t) &\in (0, 2\pi) \times (0, 1]
    \\
    u(x,0) &= \sin (x), & x &\in [0, 2\pi]
    \\
    u(0,t) &= u(2\pi,t), & t &\in (0,1]
    	\label{eq:convection_bc}
\end{align}
\end{subequations}
where the coefficient~$\beta$ denotes the \emph{convection speed}. Despite its use of periodic BCs, namely, \eqref{eq:convection_bc}, this problem~(and its variations) can be cast as a straightforward extension of~\eqref{eq:BVP}.

\subsection{Reaction-diffusion equation}

The one-dimensional reaction-diffusion equation describes a variety of phenomena, including chemical reactions, population dynamics, and heat transfer, depending on the form of its reaction term. In this paper, we consider the reaction-diffusion problem with periodic BC and Gaussian IC. Explicitly,
\begin{subequations}\label{eq:reaction}
\begin{align}
    \frac{\partial u(x,t)}{\partial t} - \nu \frac{\partial^2 u(x,t)}{\partial x^2}
    	&= \rho u(x,t) \big(1 - u(x,t) \big),
    	& (x,t) &\in (0, 2\pi) \times (0, 1]
    \\
    u(x,0) &= \exp \bigg(-\frac{1}{2} \Big( \frac{x - \pi}{\pi / 4} \Big)^2 \bigg),
    	& x &\in [0, 2\pi]
    \\
    u(0,t) &= u(2\pi,t), & t &\in (0,1]
\end{align}
\end{subequations}
where~$\nu > 0$ and~$\rho$ are the diffusion and reaction coefficients, respectively.

\subsection{Eikonal equation}

The eikonal equation is encountered in many applications involving wave propagation, e.g., electromagnetism. It also describes the (signed)~distance between any point~$x \in \Omega$ and some fixed boundary~$\del\calS$, hence its usage in vision applications. In this case, we consider the BVP
\begin{subequations}\label{eq:eikonal}
\begin{alignat}{4}
    \norm{\nabla u(x,y)} &= 1, \quad &(x,y) &\in \Omega
    \\
    u(x,y) &= 0, &(x,y) &\in \del\calS
    	\label{eq:eikonal_ic}
\end{alignat}
\end{subequations}
where~$\Omega = (-1,1)^2$ and~$\del\calS$ is a complex shape, in our case, the gears figure from~\citep{Daw23m}. In order to ensure that negative~(positive) distances are assigned to the interior~(exterior) of the shape, we add a structural constraint to the solution which enforces that~$u$ is non-negative on the boundary of~$\Omega$. Explicitly, we enforce that
\begin{equation}\label{eq:eikonal_bc}
	u(x, y) \geq 0 \text{,} \quad (x,y) \in \del\Omega
		\text{.}
\end{equation}
This can be done by adding a structural constraint to~\eqref{eq:scl_problem}, i.e., by replacing~\eqref{eq:scl_problem}(S) with a loss that induces~\eqref{eq:eikonal_bc}, explicitly
\begin{equation}\label{eq:eikonal_structural}
	\sup_{\psi \in \calP^2(\del\Omega)} \E_{(x, y) \sim \psi} \!\Big[
		\big[ -u_\theta(x, y) \big]_{+}
	\Big] \leq \epsilon_\text{s}
		\text{.}
\end{equation}
In our experiments, we find that this constraint is not particularly difficult to enforce and that we can obtain good results by replacing the worst-case~$\psi_\alpha$ by fixed, equispaced points on~$\del\Omega$.

\subsection{Helmholtz equation}

The two-dimensional Helmholtz equation models wave propagation and vibration phenomena in various physical contexts. Here, we consider the problem
\begin{subequations}\label{eq:helmholtz}
\begin{align}
    \nabla^2 u(x, y) + k^2 u(x, y) &= \tau(x, y,\pi),
    	& (x, y) &\in \text{interior}(\calS)
    \\
    u(x, y) &= \sin(\pi a_1 x) \sin(\pi a_2 y),
    	& (x, y) &\in \del\calS
\end{align}
\end{subequations}
where~$\calS = [0,1]^2$; $k > 0$ is the wave number; coefficients $\pi = (a_1,a_2)$ represent the spatial frequencies in the~$x$ and~$y$ directions, respectively; and the forcing function is~$\tau(x,y,\pi) = \left( k^2 - \pi^2 a_1^2 - \pi^2 a_2^2 \right) \left( \sin(\pi a_1 x) \sin (\pi a_2 y) \right)$.

\subsection{Burgers' equation}

The one-dimensional Burgers' equation models the behavior of a scalar field $u(x,t)$ under the combined effects of nonlinear convection and diffusion. It is commonly used in fluid dynamics and traffic flow to describe shock waves and turbulence. In particular, we consider the following BVP with periodic BCs
\begin{subequations}\label{eq:burger}
\begin{align}
    \frac{\partial u(x,t)}{\partial t} + \dfrac{1}{2} \frac{\partial u^2(x,t)}{\partial x}
    	&= \nu \frac{\partial^2 u(x,t)}{\partial x^2},
    	&(x,t) &\in (0, 1) \times (0, 1]
    \\
    u(x,0) &= h_0(x), &x &\in [0, 1]
    \\
    u(0,t) &= u(1,t), &t &\in (0,1]
\end{align}
\end{subequations}
where~$\nu > 0$ is the viscosity coefficient, which governs the strength of the diffusion term. We consider ICs~$h_0$ from the same distribution as~\citet{Li21f}.

\subsection{Diffusion-sorption equation}

The diffusion-sorption equation models the transport of a scalar field~$u(x,t)$ (e.g., a contaminant concentration) in a porous medium. It is commonly used in environmental science and chemical engineering. In terms of this PDE, we consider the following BVP
\begin{subequations}\label{eq:sorption}
\begin{align}
    \frac{\partial u(x,t)}{\partial t} &= \frac{\nu}{R\big(u(x,t)\big)}
    	\frac{\partial^2 u(x,t)}{\partial x^2},
    	&(x,t) &\in (0, 1) \times (0, 500]
    \\
    u(x,0) &= h_0(x), &x &\in [0, 1]
    \\
    u(0,t) &= 1 \text{,} \quad
    	u(1,t) = \nu \frac{\partial u(1,t)}{\partial x}
    	\text{,}
    	&t &\in (0, 500]
\end{align}
\end{subequations}
where~$\nu = 5 \times 10^{-4}$ is the diffusion coefficient and $R$ is the retardation factor defined as
\begin{equation*}
	R(u) = 1 + \frac{(1 - \phi)}{\phi} \rho_s k n_f u^{n_f - 1}
		\text{,}
\end{equation*}
where~$\rho_s = 2880$ is the bulk density, $\phi = 0.29$ is the medium porosity, $k = 3.5 \times 10^{-4}$ is Freundlich’s sorption parameter, and~$n_f = 0.874$ is Freundlich’s exponent. We consider the distribution of ICs~$h_0$ from~\citet{Takamoto22p}.

A notable aspect of~\eqref{eq:sorption} is its nonlinearity due to the dependence on~$u(x,t)$ of the effective diffusion coefficient~$\frac{\nu}{R(u)}$. What is more, it can become singular when~$u = 0$, making this equation particularly challenging to solve.

\subsection{Navier-Stokes equation}

The two-dimensional, incompressible Navier-Stokes equation in vorticity form removes the pressure term and focuses on the dynamics of rotational flow. It is used to describe the local rotation of a fluid, i.e., the vorticity~$\omega(x, t) = \nabla \times u(x,t)$, where~$u$ is the two-dimensional velocity field and~$\nabla \times f$ denotes the \emph{curl} of~$f$. Although vorticity is more challenging to model than velocity, it offers a deeper understanding of the flow dynamics. The velocity field can be recovered from the vorticity using Poisson's equation.

Explicitly, for~$x = [x_1,x_2]^\top$, we consider the BVP
\begin{subequations}\label{eq:navier}
\begin{align}
	\frac{\partial \omega(x,t)}{\partial t} + u(x,t)^\top \nabla \omega (x,t) &= \nu \nabla^2 \omega + \tau(x), \quad &(x,t) &\in (0,1)^2 \times (0, T]
	\\
	\omega(x,0) &= \omega_0, \quad &x &\in [0,1]^2,
	\\
	\nabla \cdot u(x,t) &= 0 , \quad &(x,t) &\in (0,1)^2 \times (0, T]
\end{align}
\end{subequations}
where~$\nu > 0$ is the viscosity coefficient and~$\nabla \cdot f$ denotes the divergence of~$f$. The forcing function is taken to be
\begin{equation*}
    \tau(x) = 0.1 \big( \sin\big( 2\pi (x_1 + x_2) \big) + \cos\big( 2\pi(x_1 + x_2) \big) \big)
\end{equation*}
and the ICs~$\omega_0$ are taken from the same distribution as in~\citet{Li21f}. We consider three settings, namely, $\nu = 10^{-3}$ with~$T=50$, $\nu = 10^{-4}$ with~$T=30$, and~$\nu = 10^{-5}$ and~$T=20$.

\newpage
\section{Weak solutions and robust learning}
\label{app:weak_formulation_to_statistical_problem}

The space~$\calT$ of test functions plays a fundamental role when defining the weak formulations~\eqref{eq:weak_form}. Typically, it is chosen to be a Sobolev space due to its natural compatibility with BVPs and the fact that it leads to less stringent differentiability requirements on~$u$. It therefore overcomes the main issues with the strong formulation~\eqref{eq:BVP}. A Sobolev space consists of functions in some Lebesgue space~$L^p$ whose \emph{weak derivatives} are also in~$L^p$. Recall that~$L^p$ is the space of~$p$-integrable functions. To define a Sobolev space, we therefore need to start by defining a weak derivative.

A locally integrable function~$f$ defined on an open set~$\calZ \subset \setR^d$ is weakly differentiable with respect to~$z_i$ if there exists~$Df$ also locally integrable~(i.e., $f,Df \in L^1_{\text{loc}}(\calZ)$) such that
\begin{equation}\label{eq:weak_derivative}
	\int_{\calZ} Df(z) \xi(z) dz =
		-\int_{\calD} \varphi(z) \dfrac{\del \xi(z)}{\del z_i} dz,
		\quad \text{for all } \xi \in C_c^\infty(\calZ)
		\text{,}
\end{equation}
where~$C_c^\infty(\calZ)$ is the space of infinitely differentiable, compactly supported functions. We say~$Df$ is the weak derivative of $f$. To generalize~\eqref{eq:weak_derivative} to higher-order derivatives, consider the multi-index~$\alpha = (\alpha_1, \ldots, \alpha_{d})$ to be a~$d$-tuple of non-negative integers and let~$\abs{\alpha} = \sum_{i=1}^{d} \alpha_i$. We then define the $\alpha$-weak derivatives of~$f$, denoted $D^\alpha f$, as
\begin{equation}\label{eq:alpha_weak}
	\int_{\calZ} D^\alpha f(z) \xi(z) dz =
		-\int_{\calD} \varphi(z) \frac{
		   	\partial^{\abs{\alpha}} \xi(z)
		}{
		   	\partial z_1^{\alpha_1} \cdots \partial z_d^{\alpha_d}
		} dz,
		\quad \text{for all } \xi \in C_c^\infty(\calZ)
		\text{.}
\end{equation}

We can now define what we mean by Sobolev space~\citep{Evans10p}.

\begin{definition}[Sobolev space]
	For an integer~$k \geq 0$ and~$1 \leq p \leq \infty$, we define the Sobolev space~$W^{k,p}(\calZ) = \{ f \in L^p(\calZ) \mid D^\alpha f \in L^p(\calZ) \text{ for all multi-indices } \alpha \text{ with } \abs{\alpha} \leq k \} $.
\end{definition}

We write~$W^{k,p}$ whenever the set~$\calZ$ is clear from the context. Note that since~$L^p \subset L^1_{\text{loc}}$ for~$p \geq 1$, Sobolev spaces impose more restrictions than weak differentiability. Also, while~$W^{k,p}$ is in general a Banach space, $W^{k,2}$ is a Hilbert space~\citep{Evans10p}.

Having set the groundwork, we can now proceed with the proof of Prop.~\ref{thm:weak_formulation_to_statistical_problem}.

\subsection{Proof of Prop.~\ref{thm:weak_formulation_to_statistical_problem}}

The proof follows by constructing a measure of the deviation from the weak formulation~\eqref{eq:weak_form} and showing that it is dominated by the proposed worst-case statistical loss. This immediately implies Prop.~\eqref{thm:weak_formulation_to_statistical_problem}. Explicitly, note that the weak formulation~\eqref{eq:weak_form} can equivalently be expressed as
\begin{equation}\label{eq:weak_form_eq}
    \left[
    	\int_{\calD} \Big( D [ u ](x,t) - \tau(x, t) \Big) \varphi(x, t) dx dt
    \right]^2 = 0
    	\text{,} \quad \text{for all } \varphi \in W^{k^\prime,2}
\end{equation}
where we omitted the dependence on the coefficients~$\pi$ for conciseness.
Using Jensen's inequality, we can upper bound~\eqref{eq:weak_form_eq} for any~$\varphi$ as in
\begin{equation}\label{eq:weak_form_jensen}
    \left[
       	\int_{\calD} \Big( D [ u ](x,t) - \tau(x, t) \Big) \varphi(x, t) dx dt
    \right]^2
    \leq
    \int_{\calD} \Big( D [ u ](x,t) - \tau(x, t) \Big)^2
   		\varphi^2(x, t) dx dt
    \text{.}
\end{equation}
Since~$\varphi \in W^{k^\prime,2} \subset L^2$, the partition function~$Z_\psi = \int_{\calD} \varphi(x, t)^2 dx dt = \norm{\varphi}_{L^2}^2 < \infty$ is well-defined. We can therefore consider the normalized~$\psi = \varphi^2/Z_\psi$ in~\eqref{eq:weak_form_jensen} to get
\begin{equation}\label{eq:weak_form_normalized}
    \left[
       	\int_{\calD} \Big( D [ u ](x,t) - \tau(x, t) \Big) \varphi(x, t) dx dt
    \right]^2
    \leq
    Z_\psi \int_{\calD} \Big( D [ u ](x,t) - \tau(x, t) \Big)^2 \psi(x, t) dx dt
    \text{.}
\end{equation}
Since~$\psi$ is a non-negative, normalized function, it is the density of a probability measure, i.e., $\psi \in \calP$. The following technical lemma shows that it is in fact in~$\calP^2$, i.e., it is a square-integrable probability density.

\begin{lemma}\label{thm:wasserstein}
	Let~$\varphi \in W^{k^\prime,2}$ and~$\psi \propto \varphi^2$ be a probability distribution. Then, $\psi \in \calP^2$.
\end{lemma}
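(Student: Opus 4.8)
The plan is to reduce the lemma to a single application of the Sobolev embedding theorem. By hypothesis $\psi$ is a bona fide probability density proportional to $\varphi^2$, so $\varphi \in W^{k^\prime,2}(\calD) \subset L^2(\calD)$ is not identically zero, the normalizing constant $Z_\psi = \|\varphi\|_{L^2(\calD)}^2$ lies in $(0,\infty)$, and $\psi = \varphi^2/Z_\psi$. Recalling that $\calP^2(\calD)$ is the set of probability densities that lie in $L^2(\calD)$ (``square-integrable probability densities'', as used just above the lemma), I would observe that $\psi \in \calP^2(\calD)$ is equivalent to $\varphi^2 \in L^2(\calD)$, i.e.\ to $\varphi \in L^4(\calD)$, since $\|\psi\|_{L^2(\calD)} = \|\varphi\|_{L^4(\calD)}^2 / \|\varphi\|_{L^2(\calD)}^2$. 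Thus the entire claim boils down to the embedding $W^{k^\prime,2}(\calD) \hookrightarrow L^4(\calD)$.

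To establish that embedding I would use that $\calD = \Omega\times(0,T]$ is a bounded domain in $\setR^{d+1}$ with Lipschitz boundary (the smoothness of $\del\Omega$ assumed in Section~\ref{sec:BVP} guarantees this), so the standard Sobolev embeddings apply, and then split on the sign of $2k^\prime-(d+1)$. In the genuinely subcritical case $2k^\prime < d+1$ one has $W^{k^\prime,2}(\calD)\hookrightarrow L^{2^\star}(\calD)$ with $2^\star = \tfrac{2(d+1)}{d+1-2k^\prime}$, and the dimension hypothesis $d \le 4k^\prime-1$ of Prop.~\ref{thm:weak_formulation_to_statistical_problem} is precisely the inequality $2^\star \ge 4$, whence $L^{2^\star}(\calD) \hookrightarrow L^4(\calD)$ by boundedness of $\calD$. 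In the borderline case $2k^\prime = d+1$ one has $W^{k^\prime,2}(\calD)\hookrightarrow L^q(\calD)$ for every finite $q$, in particular $q=4$. In the supercritical case $2k^\prime > d+1$ one has $W^{k^\prime,2}(\calD)\hookrightarrow C^{0,\gamma}(\bar{\calD}) \hookrightarrow L^\infty(\calD)\subset L^4(\calD)$. In all three regimes $\varphi\in L^4(\calD)$, hence $\varphi^2 = Z_\psi\,\psi \in L^2(\calD)$, hence $\psi\in\calP^2(\calD)$.

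Essentially all of the argument is bookkeeping, so the main obstacle is just to get the exponent arithmetic right — verifying $\tfrac{2(d+1)}{d+1-2k^\prime}\ge 4 \iff d\le 4k^\prime-1$ — and to keep the three regimes of the Sobolev theorem straight. The only nontrivial ingredient invoked from outside is the Sobolev embedding theorem on bounded Lipschitz domains; nonnegativity, normalization, and the reduction to $\varphi\in L^4(\calD)$ are immediate. One point worth flagging explicitly is that the dimension restriction $d \le 4k^\prime-1$, which does not appear in the lemma statement itself, is inherited from the enclosing hypothesis of Prop.~\ref{thm:weak_formulation_to_statistical_problem} under which the lemma is applied.
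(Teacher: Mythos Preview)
Your proof is correct and reaches the same reduction as the paper---namely, that $\psi\in\calP^2$ is equivalent to $\varphi\in L^4(\calD)$---but you establish that inclusion by a genuinely different tool. The paper invokes the Gagliardo--Nirenberg interpolation inequality
\[
\norm{\varphi}_{L^4}\le C\big(\norm{D^{m}\varphi}_{L^2}^{\alpha}\norm{\varphi}_{L^2}^{1-\alpha}+\norm{\varphi}_{L^2}\big)
\]
and then runs a three-way case split on the value of $d$ (namely $d=0$, $d=1$, $d\ge 2$) in order to pick admissible parameters $(m,\alpha)$ while dodging the exceptional condition~(iii) of that inequality. You instead appeal to the Sobolev embedding theorem on bounded Lipschitz domains and split on the sign of $2k^\prime-(d+1)$, verifying directly that the critical Sobolev exponent $2^\star=2(d+1)/(d+1-2k^\prime)$ satisfies $2^\star\ge4$ exactly when $d\le 4k^\prime-1$. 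Your route is shorter and more transparent: the case structure is dictated by the embedding theorem itself, the arithmetic check is a one-liner, and there is no exceptional case to avoid. The paper's approach, by contrast, keeps the derivative order $m$ explicit and separate from $k^\prime$, which in principle exposes how much regularity is actually being used (only $m=\lceil(d+1)/4\rceil$ derivatives in $L^2$), but that extra information is not exploited anywhere in the paper. Your observation that the dimension restriction $d\le 4k^\prime-1$ is inherited from Prop.~\ref{thm:weak_formulation_to_statistical_problem} rather than stated in the lemma is also apt; the paper's proof uses it implicitly in the same way.
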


Before proving Lemma~\ref{thm:wasserstein}, let us conclude the proof. The hypothesis on~$u^\dagger$ implies that
\begin{equation*}
    \sup_{\psi \in \calP^2}\ \int_{\calD} \Big( D [ u^\dagger ](x,t) - \tau(x, t) \Big)^2
    	\psi(x, t) dx dt = 0
\end{equation*}
and since~$Z_\psi$ is bounded, we obtain from~\eqref{eq:weak_form_normalized} that
\begin{equation*}
    \left[
       	\int_{\calD} \Big( D [ u^\dagger ](x,t) - \tau(x, t) \Big) \varphi(x, t) dx dt
    \right]^2 \leq 0
    \text{.}
\end{equation*}
Noticing that this holds for all~$\psi \in W^{k^\prime,2}$, we recover~\eqref{eq:weak_form_eq}, which concludes the proof.\hfill$\blacksquare$

\begin{proof}[Proof of Lemma~\ref{thm:wasserstein}]
To show~$\psi \in \calP^2$, we must show that~$\phi \in L^4$. Indeed,
\begin{equation}\label{eq:psi_p2}
	\int_{\calD} \psi(x, t)^2 dx dt
		= \frac{1}{Z^2} \int_{\calD} \varphi(x, t)^4 dx dt
		= \left( \frac{\norm{\varphi}_{L^4}}{\norm{\varphi}_{L^2}} \right)^4
		\text{.}
\end{equation}
Since~$\varphi \in W^{k^\prime,2} \subset L^2$, suffices it to show that the numerator is finite. To do so, we can use the Gagliardo-Nirenberg interpolation inequality~\citep{Nirenberg59o, Fiorenza21d} to write
\begin{equation}\label{eq:gagliardo}
	\norm{\varphi}_{L^4} \leq C \big(
		\norm{ D^{m} \varphi}_{L^2}^{\alpha} \norm{\varphi}_{L^2}^{1-\alpha}
		+ \norm{\varphi}_{L^2}
	\big)
		\text{,} \quad \text{for } \alpha \in [0,1]
		\text{,}
\end{equation}
as long as (i)~$k^\prime \geq m \in \setN$ and (ii)~$4\alpha m = d+1$. An additional condition must hold in particular cases:
\begin{equation*}
    \text{(iii) if } m - \dfrac{d+1}{2} \text{ is a non-negative integer, then } \alpha < 1
    	\text{.}
\end{equation*}
Note that~$d+1$ is the dimension of the space-time domain~$\calD$. Since~$\varphi \in W^{k^\prime,2}$, the right-hand of~\eqref{eq:gagliardo} is finite. We therefore only need to show that~(i)--(iii) are satisfied under the hypothesis of the theorem. To do so, we consider three cases:

\begin{enumerate}[(a)]
	\item $\bm{d = 0}$: in this case, $d+1$ is odd so that condition~(iii) does not apply. Immediately, \eqref{eq:gagliardo} holds for~$m = 1$ and~$\alpha = 0.25$.

	\item $\bm{d = 1}$: since~(ii) requires~$m \geq 1$, we now have that $m - \frac{d+1}{2}$ is a non-negative integer. Hence, condition~(iii) applies. Nevertheless, we can still take~$m = 1$ and~$\alpha = 0.5$ in~\eqref{eq:gagliardo}.

	\item $\bm{d \geq 2}$: we can now consider all other cases by taking
	\begin{equation*}
	    m = \ceil{\dfrac{d+1}{4}} \quad \text{and} \quad \alpha = \frac{d+1}{4m}
	    	\text{.}
	\end{equation*}
	Indeed, (ii) holds by construction. Additionally, from the hypothesis of the theorem, we have~$d+1 \leq 4k^\prime$, which by the monotonicity of the ceiling operation implies that~(i) holds. It suffices to show that~(iii) never applies. For~$2\leq d \leq 3$, we have~$m = 1$ and~$(d+1)/2 > 1$. For $d > 3$, we have
	\begin{equation}
	    \ceil{\dfrac{d+1}{4}} - \dfrac{d+1}{2} \leq
	    	\left( \dfrac{d+1}{4} + 1 \right) - \dfrac{d+1}{2} < 0
	    	\text{.}
	\end{equation}

\end{enumerate}

Thus, under the hypothesis of the theorem, \eqref{eq:gagliardo} implies that~$\norm{\varphi}_{L^4} < \infty$. From~\eqref{eq:psi_p2}, this in turn implies that~$\psi \in \calP^2$.
\end{proof}

\subsection{Prop.~\ref{thm:robust_loss}}

For the sake of completeness, we provide a short discussion of the preliminary material needed to prove this proposition.

\subsubsection{Preliminaries}\label{app:prelims_robust_loss}

We begin by showing that the supremum can be written as a distributionally robust optimization problem. Indeed, consider the worst-case loss~$\bar{\ell} (\theta) = \sup_{(x,t) \in \calD}\ \ell( u_\theta(x, t) )$ and assume that~$(x,t) \mapsto \ell( u_\theta(x, t) )$ is a function in~$L^2$. This loss can be written in epigraph form as
\begin{prob}\label{eq:epigraph}
	\bar{\ell}(\theta) = \inf_{t \in \setR}\ t \ \subjectto\ \ell( u_\theta(x, t) ) \leq t
		\text{,} \quad \text{for all }(x,t) \in \calD
		\text{.}
\end{prob}
Writing~\eqref{eq:epigraph} in Lagragian form~\cite[Ch. 4]{Bertsekas09c}, we obtain
\begin{prob}\label{P:primal_lagrangian}
	\bar{\ell}(\theta) = \inf_{t \in \setR}\ \sup_{\psi \in L^2_+}\ L_\textup{\ref{eq:epigraph}}(\theta, t, \psi)
	\text{,}
\end{prob}
where~$L^2_+$ denotes the subspace of almost everywhere non-negative functions of~$L^2$. Here, the Lagrangian $L_\textup{\ref{eq:epigraph}}(\theta, t, \psi)$ is defined as
\begin{equation}\label{E:lagrangian_sip}
	\begin{aligned}
		L_\textup{\ref{eq:epigraph}}(\theta, t, \psi) &=
			t + \int_{\calD} \psi(x,t) \big[ \ell( u_\theta(x, t) ) - t\big] dx dt
		\\
		{}&= t  \bigg[ 1 - \int_{\calD} \psi(x,t) dx dt \bigg]
			+ \int_{\calD} \psi(x,t) \ell( u_\theta(x, t) ) dx dt
		\text{,}
	\end{aligned}
\end{equation}
Since~\eqref{E:lagrangian_sip} is a linear function of~$t$, $\bar{\ell}(\theta)$ is the optimal value of a linear program parametrized by~$\theta$. Hence, strong duality holds~\cite[Ch. 4]{Bertsekas09c} and we obtain that
\begin{equation}\label{E:primal_function2}
	\bar{\ell}(\theta) = \sup_{\psi \in L^2_+}\ d_\textup{\ref{eq:epigraph}}(\psi)
	\quad\text{where}\quad
	d_\textup{\ref{eq:epigraph}}(\psi) \triangleq \min_{t \in \setR}\ L_\textup{\ref{eq:epigraph}}(\theta, t, \psi)
		\text{.}
\end{equation}
Since~$t$ is unconstrained and~$L_\textup{\ref{eq:epigraph}}$ is linear in~$t$, the dual function diverges to~$-\infty$ unless~$\int_{\calD} \psi(x,t) dx dt = 1$. From~\eqref{E:lagrangian_sip} and~\eqref{E:primal_function2}, we thus obtain that
\begin{equation}\label{eq:primal_function3}
		\bar{\ell}(\theta) = \sup_{\psi \in \calP^2} \int_{\calD} \psi(x,t) \ell( u_\theta(x, t) ) dx dt
\end{equation}

We also quickly review the necessary variational results for normal integrands. The majority of this exposition is adapted from~\cite{Rockafellar04v}. Throughout, we let the tuple $(T, \calA)$ denote a measurable space, where $T$ is a nonempty set and~$\calA$ is a $\sigma$-algebra of measurable sets belonging to $T$.

\begin{definition}[Carath\'eodory integrand]
	A function $f:T \times \setR^n\to\setR$ is called a \textbf{Carath\'eodory integrand} if it is measurable in $t$ for each $x$ and continuous in $x$ for each $t$.
\end{definition}

\begin{definition}[Decomposable space]~\label{def:decomposable}
	A space $\calF$ of measurable functions $g:T\to\setR^n$ is \textbf{decomposable} in association with a measure $\mu$ on $\calA$ if for every function $g_0\in\calF$, for every set $A\in\calA$ with $\mu(A)<\infty$, and for every bounded, measurable function $g_1:A\to\setR^n$, the space $\calF$ contains the function $g:T\to\setR^n$ defined by
	\begin{align}
		g(t) = \begin{cases}
			g_0(t) &\quad\text{for } t\in T\backslash A, \\
			g_1(t) &\quad\text{for } t\in A.
		\end{cases}
	\end{align}
\end{definition}

\noindent Note that Lebesgue spaces~$L^p$ are decomposable for all $p\in[1,\infty]$~(see, e.g.,~\cite[Ch.~14]{Rockafellar04v}). We can now state a crucial result concerning the interchangability of maximization and integration.

\begin{theorem}[Thm.\ 14.60 in~\cite{Rockafellar04v}] \label{thm:interchange}
	Let $\calF$ be a decomposable space of measurable functions and~$F:T\times\setR^n$ be a Carath\'eodory integrand. Then, as long as~$\int_T f(\tau,\phi(\tau))d\tau \neq 0$ for all~$\phi \in \calF$,
	\begin{align}
		\inf_{\phi\in\calF} \int_T f(\tau,\phi(\tau))d\tau = \int_T \left[ \inf_{x \in \setR^n} f(\tau,x) \right]d\tau.
	\end{align}
	Moreover, as long as this common value is not $-\infty$, one has that
	\begin{align}
		\bar\phi \in \argmin_{\phi\in\calF} \int_T f(\tau,\phi(\tau))d\tau
			\iff \bar\phi(\tau) \in \argmin_{x\in\setR^n} f(\tau,x) \quad\text{for almost every } \tau \in T.
	\end{align}
\end{theorem}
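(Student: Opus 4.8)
The plan is to prove the interchange formula by the two matching inequalities and then read off the characterization of minimizers from the equality case. Write $p(\tau) := \inf_{x \in \setR^n} f(\tau, x)$ for the pointwise optimal value. First I would record that a Carath\'eodory integrand is in particular a normal integrand, and that $p$ is $\calA$-measurable: since $f$ is measurable in $\tau$ and continuous in $x$, one has $p(\tau) = \inf_{x \in \setQ^n} f(\tau, x)$, a countable infimum of measurable functions of $\tau$, so $p$ is measurable and $\int_T p(\tau)\,d\tau$ is defined in $[-\infty, +\infty]$ (the $\infty - \infty$ case being excluded by the stated integrability hypothesis). The inequality $\inf_{\phi \in \calF} \int_T f(\tau, \phi(\tau))\,d\tau \geq \int_T p(\tau)\,d\tau$ is then immediate, since $f(\tau, \phi(\tau)) \geq p(\tau)$ pointwise for every $\phi \in \calF$.

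The reverse inequality is the substance of the theorem and I would obtain it by a measurable-selection-plus-patching argument. Fix $\varepsilon > 0$ and, assuming first that $\int_T p\,d\tau$ is finite, consider the sublevel multifunction $\tau \rightrightarrows S_\varepsilon(\tau) := \{\, x : f(\tau, x) \leq p(\tau) + \varepsilon \,\}$. It is nonempty-valued by definition of $p$, closed-valued by continuity of $f(\tau, \cdot)$, and measurable because sublevel sets of a normal integrand form a measurable closed-valued multifunction; the classical measurable selection theorem then yields a measurable $\phi^* : T \to \setR^n$ with $f(\tau, \phi^*(\tau)) \leq p(\tau) + \varepsilon$ for almost every $\tau$. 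This $\phi^*$ need not lie in the prescribed space $\calF$, and this is exactly what decomposability repairs: fixing a reference $\phi_0 \in \calF$ with finite integral (available under the integrability hypothesis), an exhaustion $T = \bigcup_k T_k$ by finite-measure sets, and truncation sets $A_{k,K} := T_k \cap \{\, |\phi^*| \leq K \,\}$, I would define $\phi_{k,K}$ to agree with $\phi^*$ on $A_{k,K}$ and with $\phi_0$ elsewhere. Decomposability gives $\phi_{k,K} \in \calF$, since $\phi^*|_{A_{k,K}}$ is bounded and measurable on a set of finite measure. Letting $k, K \to \infty$, the sets $A_{k,K}$ exhaust $T$ up to a null set, and a monotone/dominated-convergence argument on the positive and negative parts of $f(\cdot, \phi^*(\cdot))$ and $f(\cdot, \phi_0(\cdot))$ shows $\int_T f(\tau, \phi_{k,K}(\tau))\,d\tau \to \int_T f(\tau, \phi^*(\tau))\,d\tau \leq \int_T p\,d\tau + \varepsilon$ (replacing the constant $\varepsilon$ by a summable $\varepsilon(\tau)$ adapted to the $T_k$ to absorb $\mu(T_k)$ when $\mu$ is merely $\sigma$-finite). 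Letting $\varepsilon \downarrow 0$ closes the gap; the case $\int_T p\,d\tau = -\infty$ is treated the same way, selecting $\phi^*$ with $f(\tau, \phi^*(\tau))$ below both $p(\tau) + \varepsilon$ and a diverging measurable threshold so that the integrals tend to $-\infty$.

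For the ``moreover'' clause: if $\bar\phi \in \calF$ attains $\int_T f(\tau, \bar\phi(\tau))\,d\tau = \inf_{\phi \in \calF} \int_T f(\tau, \phi(\tau))\,d\tau = \int_T p\,d\tau$ and this common value is finite, then $f(\tau, \bar\phi(\tau)) \geq p(\tau)$ pointwise with equal finite integrals forces $f(\tau, \bar\phi(\tau)) = p(\tau)$ for almost every $\tau$, i.e.\ $\bar\phi(\tau) \in \argmin_{x \in \setR^n} f(\tau, x)$ a.e.; conversely, any such $\bar\phi$ realizes the integral $\int_T p\,d\tau$, already identified as the minimum, so it is a minimizer. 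The main obstacle I anticipate is the patching step in the second paragraph — turning the abstract $\varepsilon$-minimizing selection $\phi^*$ into an honest element of $\calF$ without losing control of the integral — which forces one to interleave $\sigma$-finiteness, range truncation, the decomposition property, and a separate bookkeeping of the positive and negative parts of the integrand so that no $\infty - \infty$ ambiguity arises; this is precisely where the integrability/properness hypothesis is spent. The measurable-selection input itself (measurability of the sublevel multifunctions of a normal integrand and existence of measurable selections) is classical, and I would simply cite it rather than reprove it.
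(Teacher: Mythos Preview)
The paper does not prove this theorem; it is quoted verbatim as Theorem~14.60 from Rockafellar's \emph{Variational Analysis} and invoked as a black box in the proof of Proposition~\ref{thm:robust_loss} (to swap the supremum over~$\psi$ with the integral over~$\calD$). There is therefore no in-paper argument to compare against.

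That said, your proposal is a faithful reconstruction of the standard proof one finds in Rockafellar: the inequality $\inf_\phi \int f(\tau,\phi(\tau))\,d\tau \geq \int p(\tau)\,d\tau$ is pointwise, the reverse inequality is obtained by measurable selection on the sublevel multifunctions of a normal integrand followed by the decomposability-based patching you describe, and the characterization of minimizers is read off from the equality case exactly as you indicate. The only remark is that the hypothesis as printed in the paper (``$\int_T f(\tau,\phi(\tau))\,d\tau \neq 0$ for all $\phi$'') is almost certainly a transcription slip for the properness condition in Rockafellar (the integral functional is not identically $+\infty$ on~$\calF$); your proof uses the latter, correctly, when you fix a reference $\phi_0 \in \calF$ with finite integral.
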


\subsection{Proof of Proposition 3.2}\label{app:final-proof-of-3.2}

\begin{proof}
	Consider a sequence~$\psi_n^\star \in \calP^2$ converging to~$\bar{\ell}(\theta)$ in~$L^2$, i.e., a solution of~\eqref{eq:primal_function3}. In other words, for every~$\delta > 0$, there exists~$N_\delta < \infty$ such that
	\begin{equation}\label{eq:sup_approximation}
		\bar{\ell}_\delta(\theta) \triangleq
			\int_{\calD} \psi^\star_n(x,t) \ell( u_\theta(x, t) ) dx dt \geq \bar{\ell}(\theta) - \delta
			\text{,} \quad \text{for all $n \geq N_\delta$.}
	\end{equation}
	Consider now~$c_\delta = \min_{n \leq N_\delta} \norm{\psi_n^\star}^2_{L^2}$. Since~$\psi_n^\star \in \calP^2 \subset L^2$, $\norm{\psi_n^\star}^2_{L^2}$ is finite for all~$n$. And since~$N_\delta$ is finite, so is~$c_\delta$. We can therefore rewrite~\eqref{eq:sup_approximation} as
	\begin{prob}\label{P:primal_function_mod}
		\bar{\ell}_\delta(\theta) = \sup_{\psi \in L^2_+}&
			&&\int_{\calD} \psi(x,t) \ell( u_\theta(x, t) ) dx dt
		\\
		\subjectto& &&\int_{\calD} \psi(x,t)dx dt= 1, \quad \norm{\psi}^2_{L^2} \leq c_\delta,
	\end{prob}
	where we rewrote~$\calP^2$ as $\{\psi \in L^2_+ \mid \int_{\calD} \psi(x,t)dx dt= 1\}$. Notice that~\eqref{P:primal_function_mod} is a convex quadratic program in~$\psi$. Furthermore, note that a zero-mean normal distribution with variance~$c^2/2$ is strictly feasible for~\eqref{P:primal_function_mod}~(it belongs to~$\calP^2$ and strictly satisfies the $L^2$-norm constraint). Hence, Slater's condition holds and we find that~\eqref{P:primal_function_mod} is strongly dual~\cite[Ch. 4]{Bertsekas09c}. We therefore conclude that for every~$c_\delta > 0$, there exists~$\mu_\delta \in \setR$ and~$0 \leq \gamma_\delta < \infty$ such that
	\begin{align*}
		\bar{\ell}_\delta(\theta) &= \sup_{\psi \in L_+^2}
			\int_{\calD} \psi(x,t) \ell( u_\theta(x, t) ) dx dt
				+ \alpha_\delta \bigg[ \int_{\calD} \psi(x,t)dx dt - 1 \bigg]
				+ \gamma_\delta \big[ \norm{\psi}^2_{L^2} - c_\delta \big]
		\\
		&= \sup_{\psi \in L_+^2}
			\int_{\calD} \big[ \psi(x,t) \ell( u_\theta(x, t) )
				+ \gamma_\delta \psi(x,t)^2
				+ \alpha_\delta \psi(x,t) \big] dx dt
			- \gamma_\delta c_\delta - \alpha_\delta
		\text{.}
	\end{align*}

	To conclude, we use the fact that~$L^2_+$ is decomposable and since that the integrand is Carath\'eodory to exchange the supremum and the integral to obtain that
	\begin{equation*}
		\bar{\ell}_\delta(\theta) = \int_{\calD} \bigg[ \sup_{\psi \in \setR}\ %
			\psi \ell( u_\theta(x, t) ) + \gamma_\delta \psi^2 + \alpha_\delta \psi \bigg] dx dt
		- \gamma_\delta c_\delta - \alpha_\delta
	\end{equation*}
	A straightforward calculation of the inner maximization problem shown above yields that the solution to~\eqref{P:primal_function_mod} is given by
	\begin{equation}\label{eq:optimal_lambda}
		\psi_\alpha(x,t) = \frac{\big[\ell( u_\theta(x, t) ) - \alpha \big]_+}{2\gamma}
		\text{,}
	\end{equation}
	where~$[z]_+ = \max(0,z)$ denotes the projection onto the non-negative orthant and~$\alpha,\gamma$ are chosen so that
	\begin{equation*}
		\int_\calD \psi_\alpha^\star(x,t) dx dt = 1 \quad \text{and} \quad \norm{\psi_\alpha}^2_{L^2} \leq c_\delta
			\text{.}
	\end{equation*}
	Hence, for any~$\delta > 0$ in~\eqref{eq:sup_approximation}, we can find~$\alpha$ such that~\eqref{eq:optimal_lambda} approximates~$\bar{\ell}$.
\end{proof}

\newpage
\section{Sampling with the Metropolis-Hastings algorithm}
\label{app:mh_sampling}

Algorithm~\ref{alg:primal_dual} uses samples from one or more~$\psi_0$ in order to compute the losses in steps~4--7. This is, however, not straightforward unless those distributions are fixed to, e.g., a uniform~(as we typically do for the BCs). That is because we only know~$\psi_0$ up to a normalization factor. We overcome this issue using MCMC techniques, more specifically, the Metropolis-Hastings algorithm~\citep{Robert04m}. In Alg.~\ref{alg:MH_sampling}, we consider the general case of sampling from~$\psi_0 \propto \ell$, where~$\ell$ is a non-negative, scalar-valued loss. We denote by~$z_n$ the desired samples and~$\mathcal{R}$ their support. In Alg.~\ref{alg:primal_dual}, for instance, we would take~$z_n = (\datapar{pde})$, $\calR = \calD \times \Pi$, and
\begin{equation}\label{eq:mh_pde_loss}
	\ell(z_n) = \Big(
		D_{\pi^\text{pde}_n} \left[ u_{\theta_k}(\pi^\text{pde}_n) \right](\data{pde})
			- \tau(\data{pde})
	\Big)^2
\end{equation}
in step~5 and~$z_n = (\data{o${_j}$})$, $\calR = \calD$, and
\begin{equation*}
	\ell(z_n) =  \Big(
		u_{\theta_k}(\pi_j,\tau_j) (\data{o${}_j$}) - u_j^\dagger(\data{o${}_j$})
	\Big)^2
\end{equation*}
in step~7.

Typically, the covariance~$\Sigma$ is taken to be diagonal (independent proposals), e.g., $\sigma^2 I$. The choice of parameter~$\sigma^2$ of the proposal~(step~3) affects the \emph{mixing rate}, i.e., how fast the samples converge to the desired distribution. Smaller values of~$\sigma^2$ will lead to slower mixing chains since the algorithm will not explore the space efficiently. On the other hand, large values will cause the acceptance probability in step~4 to be too small, so that the algorithm will remain stuck~(step~5). Oftentimes, the parameter~$\sigma^2$ is adapted during a burn-in phase to hit a specific acceptance rate, around~$30\%$ as a rule-of-thumb~\citep{Robert04m}. In our experiments, we find a reasonable value for~$\sigma^2$ and keep it fixed throughout training. We also consider a burn-in period by using only the last~$N_0$ samples generated by Alg.~\ref{alg:MH_sampling}. For the single BVP experiments in Sec.~\ref{sec:pinn_experiments} we use~$N_0 = 1000$ and for the parameterized BVPs in Sec.~\ref{sec:param_sol_experiments} we use~$N_0 = 2500$.

Given that we sample only from bounded domains~(i.e., some subset of~$\calD \times \Pi$), the target distribution~$\psi_\alpha$ has finite tails for any $\alpha$, satisfying sufficient conditions for uniform ergodicity~[see, e.g.,~\citep{Jarner00g}]. The law of the samples obtained from Alg.~\ref{alg:MH_sampling} therefore converge~(in Kullback-Leibler divergence) to~$\psi_\alpha$. Additionally, since we prove in Prop.~\ref{thm:weak_formulation_to_statistical_problem} and~\ref{thm:robust_loss} that~$\psi_0$~(and more generally, $\psi_\alpha$) are square-integrable, alternative sampling technique with faster mixing rates can be used. That is the case, for instance, of Langevin Monte Carlo~(LMC)~\citep{Robert04m}. Yet, the LMC algorithm uses first-order information of~$\ell$. For the PDE loss in~\eqref{eq:mh_pde_loss}, this means higher-order space-time derivatives of~$u_\theta$ and thus, additional backward passes. It is not clear that the benefits of faster mixing outweigh the increase in computational complexity, especially given that good results can be obtained using Alg.~\ref{alg:MH_sampling}.

\begin{algorithm}[h]
\caption{Metropolis-Hastings algorithm with Gaussian proposal}
\label{alg:MH_sampling}
\begin{algorithmic}[1]
\State $z_0 \sim \text{Uniform}(\calR)$ \Comment{Sample initial state}
\For{$n = 0, \ldots, N-1$}
    \State $\hat{z} \sim \text{Gaussian}\left(z_{n}, \Sigma \right)$
    	\Comment{Draw proposal}
    \State $\displaystyle
    	p_{n} = \min \left( 1, \frac{\ell(\hat{z})}{\ell(z_{n})} \right) \indicator(\hat{z} \in \calR)$
    	\Comment{Evaluate acceptance probability}
    \State $\displaystyle
    \begin{cases}
    	z_{n+1} = \hat{z} \text{,} & \text{with probability $p_n$}
    	\\
    	z_{n+1} = z_n \text{,} & \text{with probability $1-p_n$}
    \end{cases}$
    	\Comment{Update state}
\EndFor
\State \textbf{return} $\{ z_1, \dots, z_{N} \}$
\end{algorithmic}
\end{algorithm}

\newpage
\section{Generalization Results}
\label{app:generalization_results}

Here, we formalize the generalization guarantees for when solutions of the empirical dual problem are (probably approximately) near-optimal and near-feasible for the statistical primal problem. These were first detailed in~\citet{Chamon20p, Chamon23c}. As done in Sec.~\ref{sec:algorithm}, for simplicity and clarity, we consider~\eqref{eq:scl_problem}(M). Specifically, we are interested when solutions of~\eqref{eq:scl_dual} are (probably approximately) near-optimal and near-feasible for~\eqref{eq:scl_statistical}. Generalization guarantees for the for the complete~\eqref{eq:scl_problem} as well as its parametric extension~\eqref{eq:scl_parametric} follow in the same way.

We begin with the essential (non-convex)~duality assumptions. In particular, we assume that the hypothesis space \( \mathcal{F}_\theta = \{ u_\theta : \theta \in \Theta \} \) is sufficiently expressive~(Assumption~\ref{ass:rich_parametrization}) and that there exists a function \( u_\theta \in \mathcal{F}_\theta \) that is strictly feasible~(Assumption~\ref{ass:strict_feas}). For NNs in particular, universal approximation theorems indicate that these assumptions are satisfied for large enough models~(see, e.g., \citep{Hornik91a}). Finally, we impose a learning theoretic limit on the complexity of the hypothesis space~$\mathcal{F}_\theta$ in order to ensure that our empirical approximations are well-posed~(Assumption~\ref{ass:uniform_convergence}). The main theorem our results are based on, namely~\citep[Thm.~1]{Chamon23c}, also require the losses to be convex, $M$-Lipschitz continuous, and $[0, B]$ bounded. Since we only consider quadratic losses on the bounded domain~$\calD$, these assumptions hold immediately.

\begin{assumption}\label{ass:rich_parametrization}
The parametrization \(u_\theta\) is rich enough that for each \(\theta_1, \theta_2 \in \Theta\) and \(\beta \in [0,1]\), there exists \(\theta \in \Theta\) such that \(\sup_{(x,t) \in \calD} \left| \beta u_{\theta_1}(x,t) + (1-\beta) u_{\theta_2}(x,t) - u_\theta(x,t) \right| \leq \nu\).
\end{assumption}

\begin{assumption}\label{ass:strict_feas}
There exist \(\theta'\) such that \(u_{\theta'}\) is strictly feasible for~\ref{eq:scl_statistical}, i.e., such that
\begin{equation*}
	\E_{(x, t) \sim \psi_\alpha^\text{pde}} \!\Big[ \big( D [ u_\theta ](x,t) - \tau(x, t) \big)^2 \Big] \leq \epsilon - M\nu.
\end{equation*}
\end{assumption}

\begin{assumption}\label{ass:uniform_convergence}
There exist $\zeta(N, \delta)$ monotonically decreasing with \(N\) such that with probability $1 - \delta$ over samples~$(\data{bc}) \sim \psi_\alpha^\text{bc}$ and~$(\data{pde}) \sim \psi_\alpha^\text{pde}$, it holds for all~$\theta \in \Theta$ that
\begin{align*}
	\left| \E_{(x, t) \sim \psi_\alpha^\text{bc}} \!\Big[\big( u_\theta(x,t) - h(x, t) \big)^2
		- \frac{1}{N} \sum_{n=1}^{N} \big( u_{\theta}(\data{bc}) - h(\data{bc}) \big)^2 \right|
		&\leq \zeta(N, \delta)
	\\
	\left| \E_{(x, t) \sim \psi_\alpha^\text{pde}} \!\Big[ \big( D [ u_\theta ](x,t) - \tau(x, t) \big)^2 \Big]
		- \frac{1}{N} \sum_{n=1}^{N} \big( D [ u_\theta ](\data{pde}) - \tau(\data{pde}) \big)^2 \right|
		&\leq \zeta(N, \delta).
\end{align*}
\end{assumption}

Under these assumptions, we can bound the empirical duality gap between~\eqref{eq:scl_statistical} and~\eqref{eq:scl_dual}, i.e., $\Delta = \abs{P^\star - D^\star}$, where

\begin{prob*}
	P^\star = \minimize_{\theta \in \Theta}&
	&&\E_{(x, t) \sim \psi_\alpha^\text{bc}} \!\Big[
	\big( u_\theta(x,t) - h(x, t) \big)^2
	\Big]
	\\
	\subjectto& &&\E_{(x, t) \sim \psi_\alpha^\text{pde}} \!\Big[
	\big( D [ u_\theta ](x,t) - \tau(x, t) \big)^2
	\Big] \leq \epsilon
\end{prob*}
and
\begin{prob*}
	D^\star = \max_{\lambda \geq 0}\ \min_{\theta \in \Theta}\ \hat{L}(\theta, \lambda)
	\text{.}
\end{prob*}

\begin{proposition}\label{thm:generalization}
	Under Assumptions~\ref{ass:rich_parametrization}--\ref{ass:uniform_convergence}, it holds with probability $1 - (3m + 2)\delta$ that
	\begin{equation*}
		\Delta \leq O\big( \lambda^\star (M \nu + \zeta) \big)
			\text{,}
	\end{equation*}
	where~$\lambda^\star$ is a solution of~\eqref{eq:scl_dual}.
\end{proposition}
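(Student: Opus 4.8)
The plan is to recognize~\eqref{eq:scl_statistical} as an instance of the constrained statistical learning problems studied in~\citet{Chamon23c} and to derive Proposition~\ref{thm:generalization} by checking the hypotheses of~\citep[Thm.~1]{Chamon23c}. First I would verify the structural requirements of that result: the objective loss~$(x,t) \mapsto \big( u_\theta(x,t) - h(x,t) \big)^2$ and the constraint loss~$(x,t) \mapsto \big( D[u_\theta](x,t) - \tau(x,t) \big)^2$ are convex in the model output, and since~$\calD$ and~$\calB$ are bounded and the relevant functions continuous, both losses are bounded by some~$B < \infty$ and~$M$-Lipschitz continuous in the model output. Assumptions~\ref{ass:rich_parametrization}--\ref{ass:uniform_convergence} then match, respectively, the near-convexity of the hypothesis class (with constant~$\nu$), the Slater condition, and the uniform generalization bound~$\zeta(N,\delta)$ required by~\citep[Thm.~1]{Chamon23c}.

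Granting this, I would bound the empirical duality gap by inserting the \emph{statistical} dual value~$D^\star_{\mathrm{s}} = \max_{\lambda \geq 0} \min_{\theta \in \Theta} L(\theta,\lambda)$, where~$L$ is the population Lagrangian obtained from~\eqref{eq:lagrangian} by replacing the empirical averages with expectations under~$\psi_\alpha^{\mathrm{bc}}$ and~$\psi_\alpha^{\mathrm{pde}}$, so that
\begin{equation*}
	\Delta = \abs{P^\star - D^\star} \leq \abs{P^\star - D^\star_{\mathrm{s}}} + \abs{D^\star_{\mathrm{s}} - D^\star}
	\text{.}
\end{equation*}
For the first (duality) term, weak duality gives~$D^\star_{\mathrm{s}} \leq P^\star$; for the reverse inequality I would analyze the perturbation function~$P(\xi) = \min_{\theta}\{ \E_{\psi_\alpha^{\mathrm{bc}}}[(u_\theta - h)^2] : \E_{\psi_\alpha^{\mathrm{pde}}}[(D[u_\theta] - \tau)^2] \leq \epsilon + \xi \}$ and show, via Assumption~\ref{ass:rich_parametrization} combined with convexity and~$M$-Lipschitzness of the losses, that~$P$ differs from a convex function by at most~$O(M\nu)$; Slater (Assumption~\ref{ass:strict_feas}) then certifies~$\lambda^\star < \infty$ and gives~$P^\star - D^\star_{\mathrm{s}} = O(\lambda^\star M\nu)$. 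For the second (estimation) term, Assumption~\ref{ass:uniform_convergence} yields~$\abs{L(\theta,\lambda) - \hat{L}(\theta,\lambda)} \leq (1+\lambda)\,\zeta(N,\delta)$ uniformly over~$\theta$ with high probability; since the inner minimum over~$\theta$ diverges once the constraint is violated, the outer maximization may be restricted to a bounded interval of~$\lambda$, and taking~$\min_\theta$ then~$\max_\lambda$ on both sides gives~$\abs{D^\star_{\mathrm{s}} - D^\star} = O(\lambda^\star \zeta)$. Summing the two contributions yields the claimed~$O(\lambda^\star(M\nu + \zeta))$ bound, and the probability~$1-(3m+2)\delta$ follows from a union bound over the at most~$3m+2$ places where Assumption~\ref{ass:uniform_convergence} is invoked in the proof of~\citep[Thm.~1]{Chamon23c}.

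I expect the main obstacle to be the near-zero duality gap for the \emph{non-convex} statistical problem, i.e.\ establishing~$P^\star - D^\star_{\mathrm{s}} = O(\lambda^\star M\nu)$. The delicate point is that Assumption~\ref{ass:rich_parametrization} only allows interpolating the model~$u_\theta$ \emph{pointwise}, not the induced losses; bridging this requires the losses to be convex and~$M$-Lipschitz in the prediction, so that~$\ell\big(\beta u_{\theta_1} + (1-\beta) u_{\theta_2}\big) \leq \beta\, \ell(u_{\theta_1}) + (1-\beta)\, \ell(u_{\theta_2}) + M\nu$, after which one shows the epigraph of~$P$ lies within~$M\nu$ of its convex hull. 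This is exactly the step that uses the quadratic, bounded-domain structure of~\eqref{eq:scl_statistical}, and it is inherited directly from~\citep[Thm.~1]{Chamon23c}; the remaining steps are the bookkeeping indicated above. Extending from~\eqref{eq:scl_statistical} to the full~\eqref{eq:scl_problem} and to~\eqref{eq:scl_parametric} only enlarges the number of constraints, and hence the union bound, leaving the argument otherwise unchanged.
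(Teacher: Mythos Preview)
Your proposal is correct and takes essentially the same approach as the paper: both obtain the bound by recognizing~\eqref{eq:scl_statistical} as an instance of the constrained learning framework of~\citet{Chamon23c}, verifying the hypotheses (convex, bounded, Lipschitz losses together with Assumptions~\ref{ass:rich_parametrization}--\ref{ass:uniform_convergence}), and then invoking~\citep[Thm.~1]{Chamon23c} directly. The paper's own proof is in fact a one-line appeal to that theorem, so your additional sketch of the duality--estimation decomposition simply unpacks what that cited result contains.
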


Prop.~\ref{thm:generalization} is obtained directly from~\citep[Thm.~1]{Chamon23c}. This duality gap bound is enough to guarantee that the dual ascent algorithm in~\eqref{eq:dual_ascent} provides a near-optimal and near-feasible randomized solution of~\eqref{eq:scl_statistical}. Since all our losses are strongly convex~(quadratic), we can further show that randomization is not necessary using the last iterate guarantees from~\citep[Prop. 4.1]{Elenter24n}. This is in spite of the fact that~\eqref{eq:scl_statistical} is a non-convex optimization problem.

On the other hand, the convergence of primal-dual methods such as Alg.~\ref{alg:primal_dual} in non-convex settings is the subject of active research, see, e.g., \citep{Yang20g, Lin20n, Fiez21g, Boroun23a}. Transferring the guarantees from~\eqref{eq:dual_ascent} to Alg.~\ref{alg:primal_dual} requires additional conditions, e.g., step size separation as in~\citep{Yang20g}. Such convergence guarantees are, however, beyond the scope of this paper and left for future work.

\newpage
\section{Experimental details}
\label{app:experimental_details}

\subsection{Hyperparameters and implementation details}

Throughout our experiments, we use the relative $L_2$ error as a performance metric, which we define as
\begin{equation}\label{eq:rel_l2_error}
	e_\text{rel}(\pi,h) =\sqrt{
		\dfrac{
			\sum_{n=1}^N \big[ u_\theta(\pi,h)(x_n,t_n) - u^\dagger(\pi,h)(x_n,t_n) \big]^2
		}{
			\sum_{n=1}^N \big[ u^\dagger(\pi,h)(x_n,t_n) \big]^2
		}}
		\text{,}
\end{equation}
where~$u^\dagger$ is the solution of~\eqref{eq:BVP} obtained either analytically or by using classical numerical methods. For MLPs, the collocation points~$\{(x_n,t_n)\}$ are taken from a dense regular grid of points~(see exact numbers below), and for FNOs, they are determined by the test sets from~\citep{Li21f, Takamoto22p}. For parametrized problems, we report the average error
\begin{equation*}
	\bar{e}_\text{rel} = \dfrac{1}{J} \sum_{j=1}^J e_\text{rel}(\pi_j,h_j)
		\text{,}
\end{equation*}
evaluated either on a dense regular grid of points~(for coefficients~$\pi$, see exact numbers below) or based on the test sets from~\citep{Li21f, Takamoto22p}.

To provide sensitivity measures, we run all experiments for 10 different seeds and report average and standard deviations of the results. We find that for certain difficult problem~(e.g., diffusion with $\beta=50$ or reaction-diffusion with~$(\nu,\rho) = (3,5)$) the hyperparameters of SCL and R3 sometimes need to be adjusted for certain seeds. This occurs rarely, but shows that there may not be one-size-fits-all hyperparameter settings. For PINNs in~\eqref{eq:PINN}, we were unable to find any hyperparameters that solved those problems.

\subsubsection{Solving a specific BVP~(Sec.~\ref{sec:pinn_experiments})}

In this section, we formulated SCL problems of the form~\eqref{eq:scl_problem}(M) in order to use the same information that PINNs traditionally rely on. Recall that we do use the worst-case distribution~$\psi_0$~(or even random points) for the BCs, but instead consider fixed, regularly distributed points. This reduces the overall computational complexity of the problem at essentially no performance cost. Explicitly, we consider the following problem
\begin{prob*}
	\minimize_{\theta \in \Theta}&& &\frac{1}{N} \sum_{n=1}^{N}
		\Big( u_\theta(\data{bc}) - h(\data{bc}) \Big)^2
	\\
	\subjectto&& &\E_{(x, t) \sim \psi_0^\text{PDE}} \!\Big[
		\big( D [ u_\theta ](x,t) - \tau(x, t) \big)^2
	\Big] \leq \epsilon_\text{pde}\text{.}
\end{prob*}
To compute the objective for the convection and reaction-diffusion PDEs, we use~$256$ points~$(x^\text{bc},0)$, $x^\text{bc} \in [0,2\pi]$, for the IC and~$100$ points equally spaced in~$t \in (0,1]$ to evaluate the period BC. For the eikonal PDE, recall from~\eqref{eq:eikonal_structural} we use an additional structural constraint. In this case, we therefore formulate the SCL problem
\begin{prob*}
	\minimize_{\theta \in \Theta}&& &\frac{1}{M} \sum_{m=1}^{M}
		\big[ u_\theta(x_m,y_m) \big]^2
	\\
	\subjectto&& &\E_{(x, t) \sim \psi_0^\text{PDE}} \!\Big[
		\big( D [ u_\theta ](x,t) - \tau(x, t) \big)^2
	\Big] \leq \epsilon_\text{pde}
	\\
	&&&\frac{1}{N} \sum_{n=1}^{N} \big[ -u_\theta(x_n,y_n) \big]_+
		\leq \epsilon_\text{s}
	\text{,}
\end{prob*}
where we use fixed collocation points for the BCs and structural constraint, namely, $M = 2234$ points on~$\del\calS$~(the gears figure from~\citep{Daw23m}) and~$N=40$ points on~$\del\Omega$. We use the exact same points for~\eqref{eq:PINN}.

\paragraph{Problem hyperparameters.}
For SCL, the tolerance~$\epsilon_\text{pde}$ was selected by starting with a small value~(e.g., $10^{-4}$) and increasing it when the dual variables became too large during training to accommodate difficult problems. After a coarse hyperparameter search, we kept the weights~$\mu$ in~\eqref{eq:PINN} used in~\citep{Daw23m}. Note that we used different weights for the BC and IC to solve the eikonal~PDE, since in this case the BCs play a less critical role. All values are displayed in Table~\ref{tab:hyper_prob_pinn}. When solving the Eikonal equation with SCL, we used $\epsilon_\text{s} = 10^{-3}$ as the tolerance for the structural constraint. 

\begin{table}[b]
	\centering
	\renewcommand{\arraystretch}{1.2}
	\caption{Problem hyperparameters for solving a specific BVP}
	\label{tab:hyper_prob_pinn}
	\begin{tabular}{l|ccc|c}
		& $\mu_{\text{D}}$ & $\mu_{\text{BC}}$ & $\mu_{\text{IC}}$ & $\epsilon_\text{pde}$
		\\\hline
		Convection: $\beta = 30$                  & 1 & 100 & 100 & $10^{-3}$ \\
		Convection: $\beta = 50$                  & 1 & 100 & 100 & $5 \times 10^{-3}$ \\
		Reaction-diffusion: $(\nu, \rho) = (3,3)$ & 1 & 100 & 100 & $10^{-2}$ \\
		Reaction-diffusion: $(\nu, \rho) = (3,5)$ & 1 & 100 & 100 & $5 \times 10^{-3}$ \\
		Eikonal                                   & 1 & 10  & 500 & $5 \times 10^{-1}$ \\
	\end{tabular}
\end{table}

\paragraph{Model.}
We used MLPs with 4 hidden layers for~$u_\theta$ each with 50 neurons for the convection and reaction-diffusion equations or 128 neurons for the eikonal equation and hyperbolic tangent activation function.

\paragraph{Training.}
To evaluate the PDE loss, all methods used~$1000$ collocation points sampled uniformly at random at the beginning of each epoch~(PINN), obtained using the R3 from~\citep{Daw23m}~(R3), or using Alg.~\ref{alg:MH_sampling}~(SCL). For R3, we use the hyperparameters from~\citep{Daw23m}. For Alg.~\ref{alg:MH_sampling}, we use~$\Sigma = \diag(0.25,0.01)$ for drawing proposals for~$x$ and~$t$ respectively for both convection and reaction-diffusion. For the eikonal PDE, we use~$\Sigma = 0.04 \times I$. In both cases, we run the algorithm for~$N = 5000$ and use only the last~$1000$ samples. All methods were trained using Adam with the default parameters from~\citep{Kingma17a} and learning rates described in Table~\ref{tab:hyper_opt_pinn}. Note that the baselines only use learning rate~$\eta_p$, since they do not use dual methods.

\begin{table}[t]
	\centering
	\renewcommand{\arraystretch}{1.2}
	\caption{Training hyperparameters for solving a specific BVP}
	\label{tab:hyper_opt_pinn}
	\begin{tabular}{lcccc}
		& $\eta_p$ & \parbox{1.6cm}{\centering$\eta_d$\linebreak(only SCL)} & Learning rate decay & Iterations
		\\\hline
		Convection: $\beta = 30$                  & $10^{-3}$ & $10^{-4}$ & $0.9 \eta$ every $5\,000$ iter. & $175\,000$ \\
		Convection: $\beta = 50$                  & $10^{-3}$ & $10^{-4}$ & $0.9 \eta$ every $5\,000$ iter. & $200\,000$ \\
		Reaction-diffusion: $(\nu, \rho) = (3,3)$ & $10^{-3}$ & $10^{-4}$ & ---                             & $200\,000$ \\
		Reaction-diffusion: $(\nu, \rho) = (3,5)$ & $10^{-3}$ & $10^{-4}$ & ---                             & $200\,000$ \\
		Eikonal                                   & $10^{-3}$ & $10^{-4}$ & $0.9 \eta$ every $5\,000$ iter. & $60\,000$ \\
	\end{tabular}
\end{table}

\paragraph{Testing.}
The solution of the convection and reaction-diffusion PDEs were tested on a dense regular grid of~$256 \times 100$ points~$(x,t) \in \calD$ against their analytical solutions. The solution of the eikonal PDE was tested on a dense regular grid of~$384 \times 384$ points~$(x,y) \in \Omega$ against the ground truth predictions from~\citep{Daw23m}.

\subsubsection{Solving parametric families of BVPs~(Sec.~\ref{sec:param_sol_experiments})}

The SCL problem we formulate here is similar to the previous section, although we used the parameterized version~\eqref{eq:scl_parametric}(M). Once again, we replace the $(x,t)$ marginals of the worst-case distribution~$\psi_0^\text{BC}$ by a fixed, uniform distribution. Note, however, that we keep the worst-case formulation for the coefficients~$\pi$. Explicitly, we consider the SCL problem
\begin{prob*}
	\minimize_{\theta \in \Theta}& &&\E_{\pi \sim \psi_0^\text{BC}} \!\bigg[
		\frac{1}{N_\text{bc}} \sum_{n=1}^{N_\text{bc}}
		\Big( u_\theta(\pi)(\data{bc}) - h(\pi)(\data{bc}) \Big)^2
	\bigg]
	\\
	\subjectto& &&\E_{(x, t, \pi) \sim \psi_0^\text{PDE}} \!\Big[
		\big( D_\pi [ u_\theta(\pi) ](x,t) - \tau(\pi)(x, t) \big)^2
	\Big] \leq \epsilon_\text{pde}
\end{prob*}
Once again, we compute the objective for the convection and reaction-diffusion PDEs using~$256$ points~$(x,0)$, $x \in [0,2\pi]$, for the IC and~$100$ points equally spaced in~$t \in (0,1]$ to evaluate the period BC. For the Helmholtz PDE, we use~$4\times256$ points equally space around~$\del\Omega$ to evaluate the BC. We use the exact same points for~\eqref{eq:PINN}. Note that we include the coefficients~$\pi$ in the forcing function to account for the Helmholtz BVP~(see Sec.~\ref{app:pdes}).

\paragraph{Problem hyperparameters.}
Once again, the tolerance~$\epsilon_\text{pde}$ were selected by starting with a small value~(e.g., $10^{-4}$) and increasing when the dual variables achieved too large a value during training to accommodate difficult problems. The weights~$\mu$ in~\eqref{eq:PINN} for the baselines were taken from~\citep{Daw23m}. Exact values are displayed in Table~\ref{tab:hyper_prob_param}.

\begin{table}[b]
	\centering
	\renewcommand{\arraystretch}{1.2}
	\caption{Problem hyperparameters for solving a parametric family of BVPs}
	\label{tab:hyper_prob_param}
	\begin{tabular}{l|cc|c}
		& $\mu_{\text{D}}$ & $\mu_{\text{BC}}$ & $\epsilon_\text{pde}$
		\\\hline
		Convection                                     & 1 & 100 & $10^{-3}$ \\
		Reaction-diffusion: $(\nu,\rho) \in [0,5]^2$   & 1 & 100 & $5 \times 10^{-3}$ \\
		Reaction-diffusion: $(\nu,\rho) \in [0,10]^2$  & 1 & 100 & $10^{-2}$ \\
		Reaction-diffusion: $(\nu,\rho) \in [0,20]^2$  & 1 & 100 & $10^{-1}$ \\
		Helmholtz: $(a_1, a_2) \in [1,2]^2$            & 1 & 100 & $5 \times 10^{-1}$ \\
		Helmholtz: $(a_1, a_2) \in [1,3]^2$            & 1 & 100 & $5$ \\
	\end{tabular}
\end{table}

\paragraph{Model.}
We used MLPs with 4 hidden layers of 50 neurons.

\paragraph{Training.}
When training using~\eqref{eq:PINN}, we used~$1000$ collocation points per coefficient value, sampled uniformly at random at the beginning of each epoch. For \eqref{eq:scl_parametric}(M), we used Alg.~\ref{alg:MH_sampling}. For the PDE loss, we used~$\Sigma = \diag(0.25, 0.01, \sigma_\pi^2)$ to sample from~$(x,t,\pi)$ for both the convection~($\sigma_pi^2 = 9$ for coefficient~$\beta$) and reaction-diffusion~[$\sigma_pi^2 = (1,1)$ for coefficients~$(\nu,\rho)$] equations. For the Helmholtz PDE, we used~$\Sigma = 0.04 \times I$ to sample from~$(x,t,\pi)$ for coefficients~$\pi = (a_1,a_2)$. The same variances~$\sigma_\pi^2$ were used to sample worst-case coefficients for the BC~(recall that the distribution over collocation points is fixed). In all cases, SCL uses the last~$2500$ out of~$5000$ samples generated by the MH algorithm, except for the Helmholtz PDE with~$(a_1, a_2) \in [1,3]^2$, where we use all $5,000$ samples to account for the additional difficulty of the problem.

All models were trained for~$200,000$ using Adam with the default parameters from~\citep{Kingma17a} and learning rate of $10^{-3}$ for~\eqref{eq:PINN}. For SCL, the dual learning rate was~$\eta_d = 10^{-4}$. In all cases, we decayed the learning rates by a factor of~$0.9$ every 5000 epochs, with the exception of the reaction-diffusion PDE where we found it better to keep the learning rate constant.

\paragraph{Testing.}
The solution of the convection and reaction-diffusion PDEs were tested on a dense regular grid of~$256 \times 100 \times 1000$ points~$(x,t,\pi) \in \calD \times \Pi$ and~$256 \times 100 \times 100 \times 100$ points~$(x,t,\nu,\rho) \in \calD \times \Pi$, respectively, against their analytical solutions. The solution of the Helmholtz PDE was tested on a dense regular grid of~$256 \times 256 \times 100 \times 100$ points~$(x,y,a_1,a_2) \in \Omega \times \Pi$ against its analytical solution.

\subsubsection{Leveraging invariance when solving BVPs~(Sec.~\ref{sec:invariance_constraint_experiments})}

The SCL problems formulated in this section are of the form~\eqref{eq:scl_problem}(M+I). To showcase the advantages of integrating additional knowledge, such as the structure of the BVP solution, we consider fixed collocation points for the constraints~\eqref{eq:scl_problem}(M). This is in fact not uncommon for PINNs, see, e.g., \citep{Raissi19p, Lu21p}. These points are sampled uniformly at random once and then kept constant throughout training. Recall that for our convection BVP~(Sec.~\ref{app:pdes}), the solution is periodic with period~$2\pi/\beta$. We therefore use the problem
\begin{prob*}
	\minimize_{\theta \in \Theta}& &&\frac{1}{N} \sum_{n=1}^{N} \Big( u_\theta(x_n,t_n) - h(x_n, t_n) \Big)^2
	\\
	\subjectto& &&\frac{1}{M} \sum_{m=1}^{M} \Big( D [ u_{\theta} ](x_m,t_m) - \tau (x_m, t_m)\Big)^2  \leq \epsilon_\text{pde}
	\\
	&&&\E_{(x, t) \sim \psi_0^\text{ST}} \!\bigg[
		\Big( u_\theta(x, t) - u_\theta \Big[x, t + \frac{2 \pi}{\beta} \Big] \Big)^2
	\bigg] \leq \epsilon_\text{s}
\end{prob*}
For both~\eqref{eq:PINN} and~\eqref{eq:scl_parametric}(M) we use a total of~$N = 456$ collocation points, namely~$256$ points~$(x,0)$, $x \in [0,2\pi]$, for the IC and~$100$ points equally spaced in~$t \in (0,1]$ to evaluate the period BC. We use~$M = 100$ collocation points sampled uniformly at random in the beginning of training and kept fixed throughout for the PDE loss.

\paragraph{Problem hyperparameters.}
For SCL, we take~$\epsilon_\text{pde} = 10^{-3}$ and~$\epsilon_{s} = 10^{-3}$. For~\eqref{eq:PINN}, we use the weights~$\mu$ from~\citep{Daw23m}, namely, $\mu_{\text{D}} = 1$, $\mu_{\text{BC}} = 100$, and~$\mu_{\text{IC}} = 100$.

\paragraph{Model.}
We used MLPs with 4 hidden layers of 50 neurons.

\paragraph{Training.}
For \eqref{eq:scl_parametric}(I), we used Alg.~\ref{alg:MH_sampling}. For the invariance loss, we used~$\Sigma = \diag(0.5, 0.1)$ to sample from~$(x,t)$. All models were trained for~$200\,000$ epochs using Adam with the default parameters from~\citep{Kingma17a} and learning rate of $10^{-3}$ for~\eqref{eq:PINN}. For SCL, the dual learning rate was~$\eta_d = 10^{-4}$. We decayed the learning rates by a factor of~$0.9$ every 5000 epochs.

\paragraph{Testing.}
The solution was tested on a dense regular grid of~$256 \times 100$ points~$(x,t) \in \calD$ against its analytical solution.

\subsubsection{Supervised solution of BVPs~(Sec.~\ref{sec:pointwise_constrained_fno_experiments})}

For supervised experiments, we formulate an SCL without objective using only data constraints~(observational knowledge). Since we use FNOs, that can only make predictions on uniform grids, we replace~$\psi_0^{OB}$ in~\eqref{eq:scl_problem} with a uniform distribution over a fixed regular grid. The problem the FNOs tackle is that of predicting the solution~$u^\dagger$ of a BVP given its IC~$h(x,0)$. Hence, the training data is composed of pairs~$(u_j^\dagger,h_j)$ describing ICs and their corresponding solution. We therefore pose the SCL problem
\begin{prob*}
	\minimize_{\theta \in \Theta}& &&0
	\\
	\subjectto& &&\frac{1}{N} \sum_{n=1}^{N} \left( u_\theta(h_j)(x_n,t_n) - u_j^\dagger(x_n, t_n) \right)^2 \leq \epsilon_{\text{o}}, \quad j = 1, \dots, J.
\end{prob*}

\paragraph{Problem hyperparameters.}
For SCL, the tolerance was chosen as before, using a coarse hyperparameter search. The final values are reported in Table~\ref{tab:hyper_fno}.

\begin{table}[t]
	\centering
	\renewcommand{\arraystretch}{1.2}
	\caption{Problem hyperparameters for supervised solutions}
	\label{tab:hyper_fno}
    \resizebox{\textwidth}{!}{
	\begin{tabular}{l|ccccc}
		& $\epsilon_\text{o}$ & \makecell{\# training \\ samples} & \makecell{\# validation \\ samples} & \makecell{\# test \\ samples} & FNO architecture
		\\\hline
		Burgers'                       & $10^{-3}$          & 800 & 200 & 200 & 16 modes, 4 layers \\
		Diffusion-sorption             & $10^{-3}$          & 1000 & 500 & 500 & 8 modes,  5 layers \\
		Navier-Stokes: $\nu = 10^{-3}$ & $10^{-2}$          & 1000 & 500 & 500 & 8 modes,  8 layers \\
		Navier-Stokes: $\nu = 10^{-4}$ & $5 \times 10^{-2}$ & 1000 & 500 & 500 & 8 modes,  8 layers \\
		Navier-Stokes: $\nu = 10^{-5}$ & $10^{-2}$          & 800 & 200 & 200 & 8 modes,  8 layers \\
	\end{tabular}
    }
\end{table}

\paragraph{Model.}
We used the FNO architecture from~\citep{Li21f} with 64 hidden channels, 128 projection channels, and no lifting channels. The number of modes and layers are reported in Table~\ref{tab:hyper_fno}.

\paragraph{Training and Testing.}
The datasets from~\citep{Li21f} were used for Burgers' and Navier-Stokes equation, whereas the diffusion-sorption dataset was taken from~\citep{Takamoto22p}. All models were trained for~$500$ epochs using Adam with the default settings from~\citep{Kingma17a} with learning rate~$10^{-3}$ and batch size of~$20$. For SCL, the dual learning rate was~$\eta_d = 10^{-4}$. All learning rates were decreased by a factor of~$0.5$ every~$100$ epochs. All test errors are reported for the model that achieved the lowest validation error during training. The sizes of the training, validation and test sets are reported in Table~\ref{tab:hyper_fno}.

\newpage
\section{Additional experiments}
\label{app:additional_experiments}

\renewcommand{\arraystretch}{1.25}

\subsection{Solving parametric families of BVPs}

We begin by presenting additional experiments focused on solving parametric families of BVPs~(Sec.~\ref{sec:param_sol_experiments}) and show how the samples from MH can be used to gain insights into the PDE and the training process.

In what follows, we report the ``relative (computational) complexity'' of~\eqref{eq:scl_parametric} in terms of differential operator evaluations per epoch. Explicitly,
\begin{equation*}
	\text{Relative complexity} = \frac{
		\text{\# differential operator evaluations per epoch for \eqref{eq:scl_parametric}}
	}{
		\text{\# differential operator evaluations per epoch for \eqref{eq:PINN}}
	} \times 100 \%
\end{equation*}

Recall that in order to evaluate the PDE loss, \eqref{eq:PINN} uses 1000 collocation points per discretized coefficient~$\pi_j$ whereas~\eqref{eq:scl_parametric} takes $5000$ steps of Alg.~\ref{alg:MH_sampling}.

\paragraph{Convection equation.}
Table~\ref{tab:convection_param_sol_comparisson} considers simultaneously solving all BVPs corresponding to the convection equation with $\beta \in [1,30]$ and compares~\eqref{eq:scl_parametric}(M) with~\eqref{eq:PINN}. We see that~\eqref{eq:scl_parametric}(M) outperforms or matches~\eqref{eq:PINN} while being more efficient. In particular,~\eqref{eq:scl_parametric}(M) significantly outperforms~\eqref{eq:PINN} in terms of relative $L_2$ error for all but the finest discretization where they perform similarly. However, for that discretization,~\eqref{eq:scl_parametric}(M) is much more efficient that~\eqref{eq:PINN}. In that sense, it strikes a better compromise between error and computational cost. This is even clearer from Fig.~\ref{fig:error_vs_fp_convection}, particularly when we normalize the $x$-axis in terms of differential operator evaluations.

\begin{table}[tbh]
\centering
\caption{Relative $L_2$ error and computational efficiency for the parametric convection problem.}
	\label{tab:convection_param_sol_comparisson}
\begin{tabular}{|c|c|c|c|}
\hline
	\multirow{2}{*}{Discretization for~\eqref{eq:PINN}} &
	\multicolumn{2}{c|}{Average Relative $L_2$ Error} &
	\multirow{2}{*}{\makecell{Relative complexity\\\eqref{eq:scl_parametric} $\div$ \eqref{eq:PINN}}}
\\\cline{2-3}
& \eqref{eq:PINN} & \eqref{eq:scl_parametric}(M) &
\\\hline
$\{1.0, 10.0, 20.0, 30.0\}$                  & 0.365 & \multirow{3}{*}{$0.0110$} & 125\%
\\
$\{1.0, 5.0, 10.0, 15.0, 20.0, 25.0, 30.0\}$ & 0.220 &                         & 71\%
\\
$\{1.0, 2.0, 3.0, 4.0, 5.0, \ldots, 30\}$    & $0.0476$  &                        & 16\%
\\\hline
\end{tabular}
\end{table}

\begin{figure}[tbh]
    \centering
    \includegraphics[width=\linewidth]{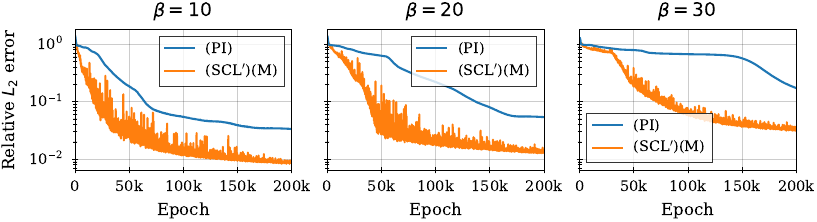}

    \includegraphics[width=\linewidth]{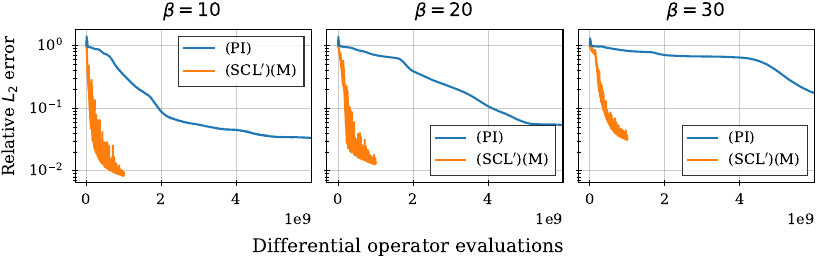}
    \caption{Relative $L_2$ error as a function of training epoch and differential operator evaluations for the parametric convection problem.}
    \label{fig:error_vs_fp_convection}
\end{figure}

\clearpage

\paragraph{Reaction-diffusion equation.}

Additional results for the parametric reaction-diffusion BVP are shown in Table~\ref{tab:rd_param_sol_comparisson}. Same as for the convection PDE, \eqref{eq:scl_parametric}(M) is more efficient than~\eqref{eq:PINN}, striking a better compromise between computational complexity and performance. Indeed, in order to achieve the same error as~\eqref{eq:scl_parametric}(M), \eqref{eq:PINN} requires between~5 and~7 times more evaluations of the PDE loss, i.e., of the differential operator~$D$, per epoch. This is once again clear when looking at the evolution of the error during training~(Fig.~\ref{fig:error_epoch_rd}), especially when the $x$-axis is displayed is terms of PDE evaluations. The distribution of errors across parameters is also more homogeneous for the SCL solution~(Fig.~\ref{fig:param_sol_reaction_diffusion}).

Finally, we can once again inspect the samples from~$\psi_0$ throughout training to understand where the advantage of SCL comes from~(Fig.~\ref{fig:samples_reaction_diffusion}). First, we do not note any interesting behavior over the $x$-marginal~(the samples are mostly uniform and the histogram is therefore omitted). Once again, we see that~\eqref{eq:scl_parametric}(M) starts by focusing more on earlier times~$t$, fitting the solution of the PDE ``causally.''
Additionally, since the diffusion term tends to make the solution more homogeneous for larger times, it is clear that these are regions that are easier to fit and therefore require less attention. Once again, this behavior is not manually encouraged, but arises naturally from Alg.~\ref{alg:primal_dual}. As for the~$(\nu,\rho)$, we see that the distributions shift during training, indicating the change in difficulty of fitting the solution of the reaction-diffusion PDE. In the end, the samples for~$\rho$ are quite uniform, while we notice that there remains a strong focus on smaller values of~$\nu$. Note that these distributions reflect the error patterns of the final solution~(Fig.~\ref{fig:param_sol_reaction_diffusion}).

\begin{table}[tbh]
\centering
\caption{Relative $L_2$ error and computational efficiency for the parametric reaction-diffusion problem.}
\label{tab:rd_param_sol_comparisson}
\begin{tabular}{|c|c|c|c|c|}
	\hline
	\multirow{2}{*}{\makecell[c]{Coefficients\\range}} &
	\multirow{2}{*}{\makecell{Discretization for~\eqref{eq:PINN}}} &
	\multicolumn{2}{c|}{Average relative $L_2$ error} &
	\multirow{2}{*}{\makecell{Relative complexity\\\eqref{eq:scl_parametric} $\div$ \eqref{eq:PINN}}}
	\\ \cline{3-4}
	&  & \eqref{eq:PINN} & \eqref{eq:scl_parametric}(M) &
	\\ \hline
	\multirow{4}{*}{\makecell{$\nu \in [0,5]$ \\ $\rho \in [0,5]$}} &
	$\{0.0, 2.5, 5.0\}^2$                  & 0.0793 & \multirow{4}{*}{0.0126} & 55.6\%
	\\
	& $\{0.0, 1.67, 3.33, 5.0\}^2$         & 0.0190 &                        & 31.3\%
	\\
	& $\{0.0, 1.25, 2.5, 3.75, 5.0\}^2$    & 0.0119 &                        & 20\%
	\\
	& $\{0.0, 1.0, 2.0, 3.0, 4.0, 5.0\}^2$ & 0.0105 &                        & 13.9\%
	\\\hline
	\multirow{3}{*}{\makecell{$\nu \in [0,10]$ \\ $\rho \in [0,10]$}} &
	$\{0.0, 5.0, 10.0\}^2$                  & 0.636  & \multirow{3}{*}{0.0133} & 55.6\%
	\\
	& $\{0.0, 2.5, 5.0, 7.5, 10.0\}^2$      & 0.0228 &                         & 20\%
	\\
	& $\{0.0, 2.0, 4.0, 6.0, 8.0, 10.0\}^2$ & 0.0131 &                         & 13.9\%
	\\\hline
	\multirow{2}{*}{\makecell{$\nu \in [1,20]$ \\ $\rho \in [1,20]$}} &
	$\{1.0, 10.0, 20.0\}^2$              & 0.841  & \multirow{2}{*}{0.0204} & 55.6\%
	\\
	& $\{1.0, 5.0, 10.0, 15.0, 20.0\}^2$ & 0.0128 &                         & 20\%
	\\\hline
\end{tabular}
\end{table}

\begin{figure}[tbh]
    \centering
    \includegraphics[width=\linewidth]{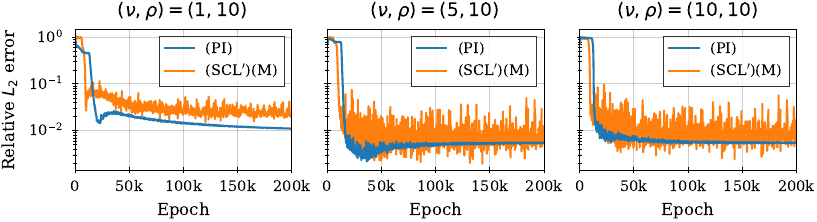}

    \includegraphics[width=\linewidth]{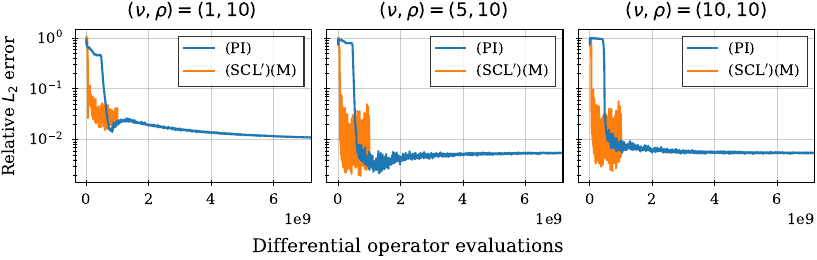}
    \caption{Relative $L_2$ error as a function of training epoch and differential operator evaluations for the parametric reaction-diffusion problem. \eqref{eq:PINN} uses the discretization $(\nu, \rho) \in \{0.0, 2.0, 4.0, 6.0, 8.0, 10.0\}^2$}
    \label{fig:error_epoch_rd}
\end{figure}

\begin{figure}[tbh]
    \centering
    \includegraphics[width=\linewidth]{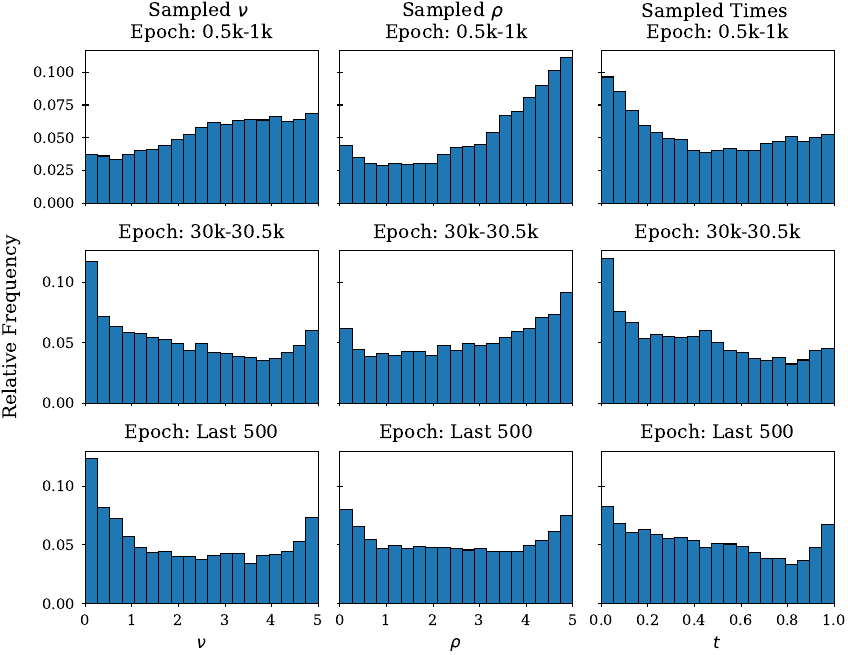}
    \caption{Histogram of (marginalized)~MH samples of~$\psi_0$ for the parametric reaction-diffusion equation.}
    \label{fig:samples_reaction_diffusion}
\end{figure}

\begin{figure}[tbh]
    \centering
    \includegraphics[width=\linewidth]{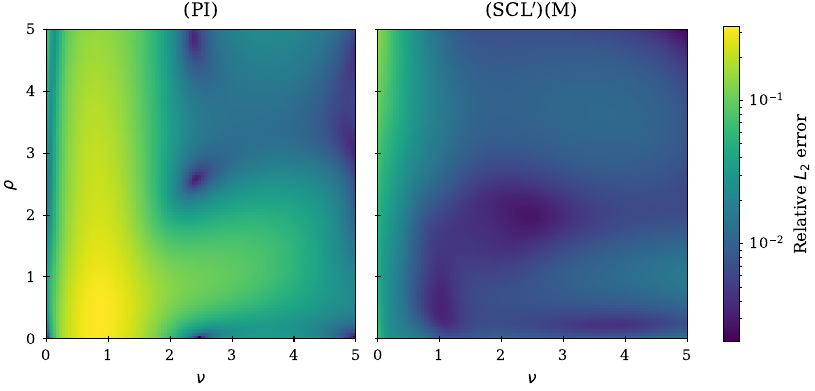}

    \hspace{-12mm}{\small (a)}

    \includegraphics[width=\linewidth]{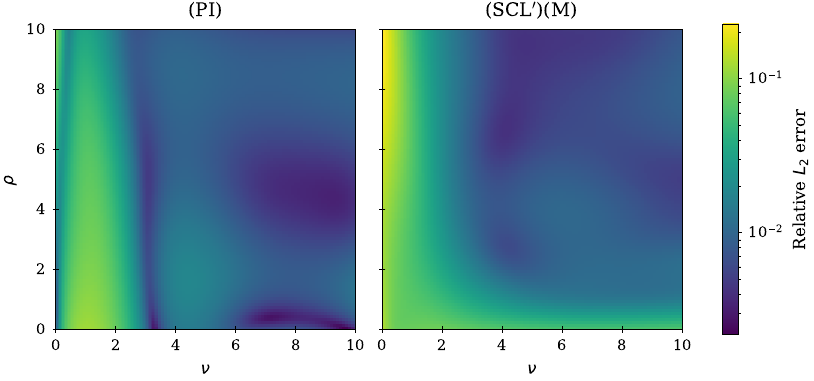}

    \hspace{-12mm}{\small (b)}

    \caption{Relative $L_2$ error for reaction-diffusion solutions trained using~\eqref{eq:scl_parametric} and~\eqref{eq:PINN} with discretization (a)~$(\nu, \rho) \in \{ 0.0, 2.5, 5.0 \}^2$ and (b)~$(\nu, \rho) \in \{0.0, 2.0, 4.0, 6.0, 8.0, 10.0\}^2$.}
    \label{fig:param_sol_reaction_diffusion}
\end{figure}

\clearpage

\paragraph{Helmholtz equation.}

Once again, we see from Table~\ref{tab:rd_param_sol_comparisson} that~\eqref{eq:scl_parametric}(M) makes more efficient use of computations than~\eqref{eq:PINN}. Indeed, in order to achieve the same error as~\eqref{eq:scl_parametric}(M), \eqref{eq:PINN} requires between~3 and~4 times more evaluations of the PDE loss~(i.e., of the differential operator~$D$) per epoch. This is clear by looking at the evolution of the error during training after normalizing the $x$-axis in terms of computational complexity~(Fig.~\ref{fig:error_epoch_rd}). Naturally, taking finer discretizations eventually leads to lower errors~(Fig.~\ref{fig:param_sol_helmholtz}), but the computational cost associated also rises considerably. On the other hand, we keep the computational cost of~\eqref{eq:scl_parametric} fixed throughout all experiments, showcasing its good performance across scenarios with little to no manipulation.

We can also inspect the samples from~$\psi_0$ throughout training to gain a better understanding of the difficulties perceived by the MLP to fit solutions of this problem~(Fig.~\ref{fig:samples_helmholtz}). We display only the~$x$ and~$a_1$ marginals, seen as they display the same behaviors as~$y$ and~$a_2$ respectively due to the symmetry of the Helmholtz equation. Here, we notice that the distribution of~$x$ has an alternating pattern initially. This makes sense seen as the solution of the Helmholtz equation is periodic. SCL clearly picks up on this pattern, focusing on the modes of the solution. As training continues, the sampling becomes more uniform, although with a focus on the boundaries of the domain where the MLP clearly has difficulties fitting the solution. With respect to the problem coefficients, we notice that~$\psi_0$ concentrates on larger values of~$a_1$, especially in the beginning of training. These are indeed coefficients for which the solution of the problem is harder to fit~(as evidenced by Fig.~\ref{fig:param_sol_helmholtz}).

\begin{table}[tbh]
\centering
\caption{Relative $L_2$ error and computational efficiency for the parametric reaction-diffusion problem.}
\begin{tabular}{|c|c|c|c|c|}
\hline
\multirow{2}{*}{\makecell[c]{Coefficients\\range}} &
\multirow{2}{*}{\makecell{Discretization for~\eqref{eq:PINN}}} &
\multicolumn{2}{c|}{Average relative $L_2$ error} &
\multirow{2}{*}{\makecell{Relative complexity\\\eqref{eq:scl_parametric} $\div$ \eqref{eq:PINN}}}
\\ \cline{3-4}
&  & \eqref{eq:PINN} & \eqref{eq:scl_parametric}(M) &
\\ \hline
\multirow{3}{*}{\makecell{$a_1 \in [1,2]$ \\ $a_2 \in [1,2]$}} &
$\{1.0, 1.5, 2.0\}^2$                  & 0.0307  & \multirow{3}{*}{0.0125} & 55.6\%
\\
& $\{1.0, 1.25, 1.5, 1.75, 2.0\}^2$    & 0.00593 &                         & 20\%
\\
& $\{1.0, 1.2, 1.4, 1.6, 1.8, 2.0\}^2$ & 0.00463 &                         & 13.9\%
\\\hline
\multirow{3}{*}{\makecell{$a_1 \in [1,3]$ \\ $a_2 \in [1,3]$}} &
$\{1.0, 2.0, 3.0\}^2$                  & 1.34 & \multirow{3}{*}{0.0549} & 55.6\%
\\
& $\{1.0, 1.5, 2.0, 2.5, 3.0\}^2$      & 0.00943 &                      & 20\%
\\
& $\{1.0, 1.4, 1.8, 2.2, 2.6, 3.0\}^2$ & 0.00953 &                      & 13.9\%
\\\hline
\end{tabular}
\label{tab:helmholtz_param_sol_comparisson}
\end{table}

\begin{figure}[tbh]
    \centering
    \includegraphics[width=\linewidth]{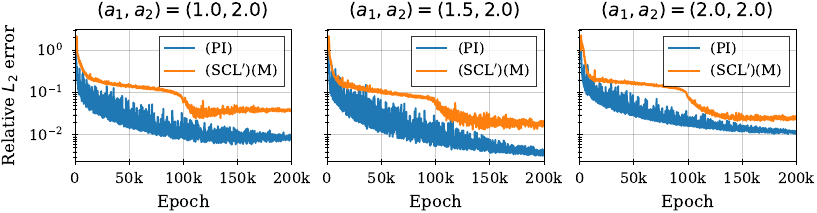}

    \includegraphics[width=\linewidth]{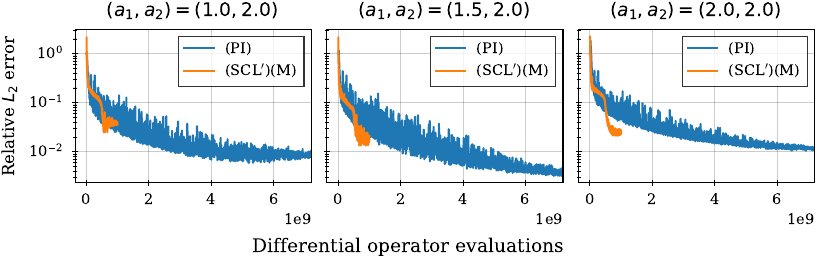}
    \caption{Relative $L_2$ error as a function of training epoch and differential operator evaluations for the parametric Helmholtz problem. \eqref{eq:PINN} uses the discretization $(a_1, a_2) \in \{1.0, 1.2, 1.4, 1.6, 1.8, 2.0 \}^2$}
    	\label{fig:error_epoch_helmholtz}
\end{figure}

\begin{figure}[tbh]
    \centering
    \includegraphics[width=\linewidth]{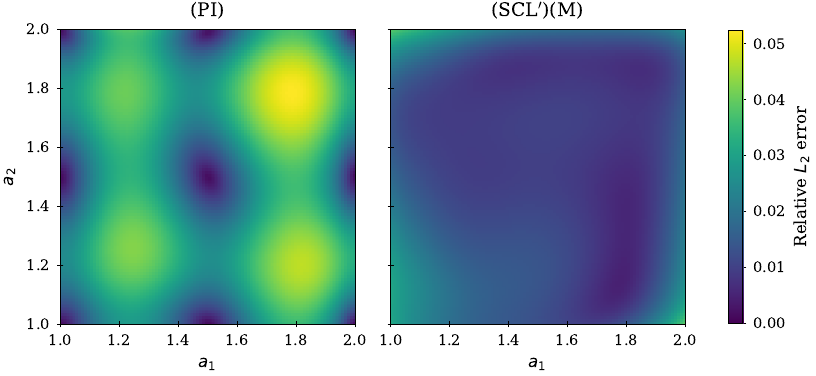}

    \hspace{-12mm}{\small (a)}

    \includegraphics[width=\linewidth]{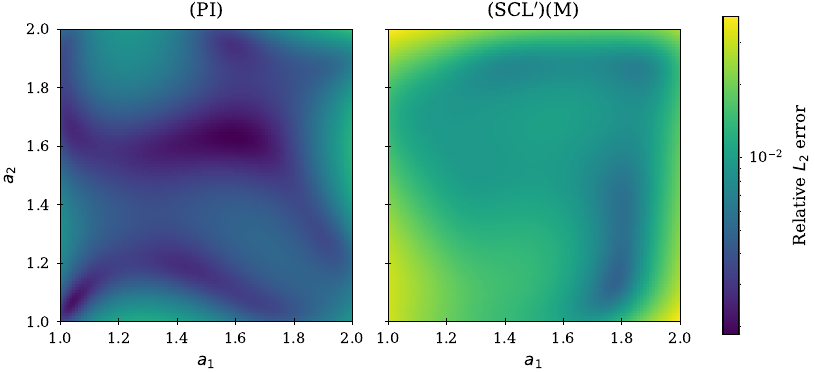}

    \hspace{-12mm}{\small (b)}

    \caption{Relative $L_2$ error for Helmholtz solutions trained using~\eqref{eq:scl_parametric} and~\eqref{eq:PINN} with discretization (a)~$(a_1, a_2) \in \{1.0, 1.5, 2.0\}^2$ and (b)~$(a_1, a_2) \in \{1.0, 1.2, 1.4, 1.6, 1.8, 2.0\}^2$.}
    	\label{fig:param_sol_helmholtz}
\end{figure}

\begin{figure}[tbh]
    \centering
    \begin{minipage}{0.48\linewidth}
        \centering
        \includegraphics[width=\linewidth]{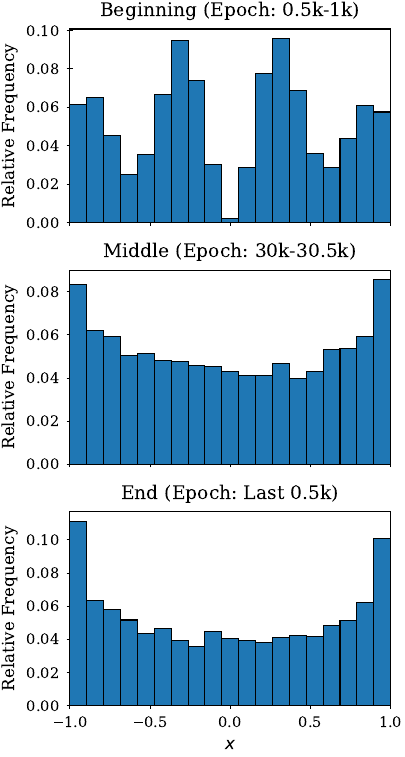}
    \end{minipage}
    \hfill
    \begin{minipage}{0.48\linewidth}
        \centering
        \includegraphics[width=\linewidth]{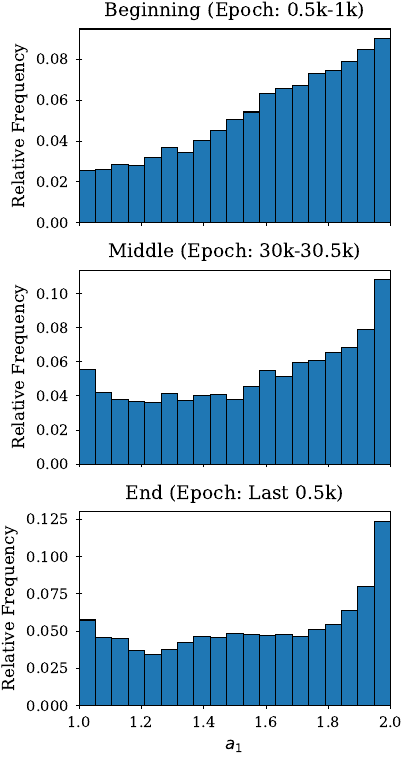}
    \end{minipage}
    \caption{Histogram of (marginalized)~MH samples of~$\psi_0$ for the parametric Helmholtz equation.}
    \label{fig:samples_helmholtz}
\end{figure}

\clearpage

\subsection{Supervised solution of BVPs}

\paragraph{Burgers' equation.}

Fig~\ref{fig:box_plot_burgers} shows the box plots for the relative~$L_2$ error across samples. They show that not only is the average error across samples smaller when using~\eqref{eq:scl_problem}(O), but in fact the whole error distribution is shifted down. This is due to the variety of weights given to different samples, weights that in fact vary during the training process~(Fig~\ref{fig:burgers_eq_dual_variable_statistics_histogram}). The few large dual variables are related to ICs that are harder to fit and can provide important information for data collection or architecture improvements. We have already shown which ICs are harder for FNOs to fit in Fig.~\ref{fig:dual_variables_burgers}.

\begin{figure}[tbh]
	\centering
	\centering
	\includegraphics[width=\linewidth]{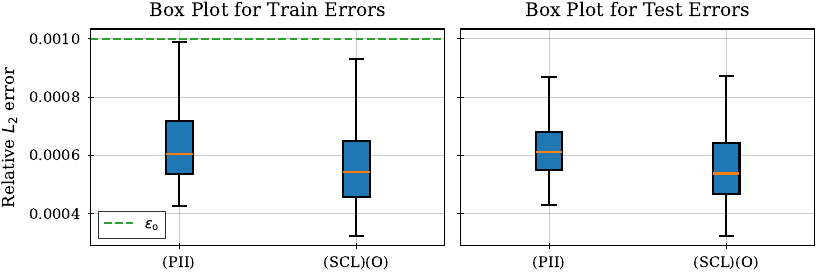}
	\caption{Distribution of train and test errors (across data points) for the Burgers' equation~(orange line indicates the median).}
	\label{fig:box_plot_burgers}
\end{figure}

\begin{figure}[tbh]
	\centering
	\begin{minipage}{0.48\linewidth}
		\centering
		\includegraphics[width=\linewidth]{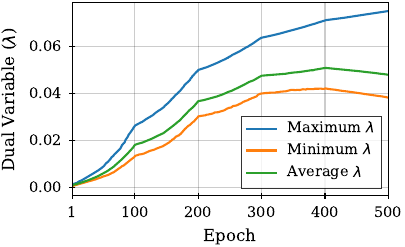}

		\hspace{8mm}{\small (a)}
	\end{minipage}
	\hfill
	\begin{minipage}{0.48\linewidth}
		\centering
		\includegraphics[width=\linewidth]{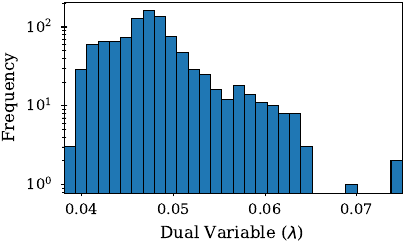}

		\hspace{8mm}{\small (b)}
	\end{minipage}
	\caption{Dual variables obtained by Alg.~\ref{alg:primal_dual} for the Burgers' equation: (a)~evolution during training and (b)~distribution of the dual variables after training.}
	\label{fig:burgers_eq_dual_variable_statistics_histogram}
\end{figure}

\clearpage

\paragraph{Diffusion-sorption equation.}

We once again display the distribution of the relative $L^2$ errors across training and test data points for models trained using~\eqref{eq:NO} and~\eqref{eq:scl_problem}(O)~(Fig.~\ref{fig:box_plots_diffsorp}). Here, we clearly see that by bounding the maximum error rather than minimizing its average leads to a more homogeneous fit across samples. This is once again due to the different weight assigned to each data sample, weights that also evolve throughout training~(Fig~\ref{fig:dual_variables_diffsorp}a). By inspecting the ICs with large and small values of~$\lambda$, we notice the pattern showcased in Fig~\ref{fig:dual_variables_diffsorp}b, where IC with either large or small magnitude are more challenging to fit than those with moderate ones.

\begin{figure}[tbh]
	\centering
	\includegraphics[width=\linewidth]{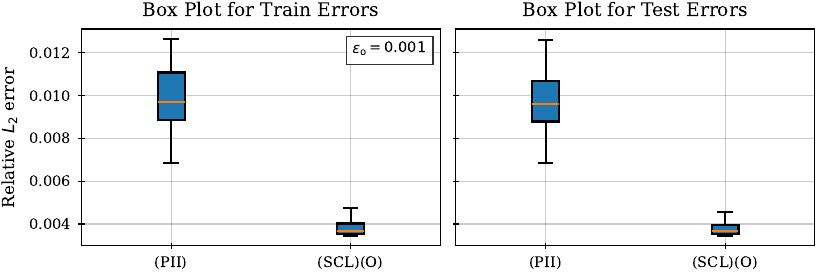}
	\caption{Distribution of train and test errors (across data points) for the diffusion-sorption equation~(orange line indicates the median).}
	\label{fig:box_plots_diffsorp}
\end{figure}

\begin{figure}[tbh]
	\centering
	\begin{minipage}{0.48\linewidth}
		\centering
		\includegraphics[width=\linewidth]{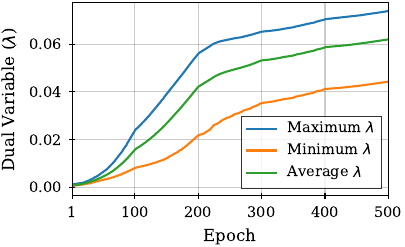}

		\hspace{8mm}{\small (a)}
	\end{minipage}
	\hfill
	\begin{minipage}{0.48\linewidth}
		\centering
		\includegraphics[width=\linewidth]{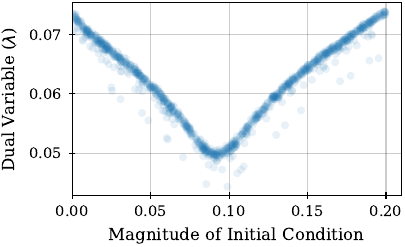}

		\hspace{8mm}{\small (b)}
	\end{minipage}
	\caption{Dual variables obtained by Alg.~\ref{alg:primal_dual} for the diffusion-sorption equation: (a)~evolution during training and (b)~value as a function of the IC magnitude.}
	\label{fig:dual_variables_diffsorp}
\end{figure}

\clearpage

\paragraph{Navier-Stokes equation.}

We start by displaying an extended version of Table~\ref{tab:results_NO} including standard deviations of results over 10 runs to show the consistency of our results across random seeds~(Table~\ref{tab:results_NO_std}). We then turn to Fig~\ref{fig:box_plot_navier_stokes} which shows that~\eqref{eq:scl_problem}(O) not only improves the average relative $L_2$ error, but its entire distribution across train and test data points. For the Navier-Stokes equations, however, it is harder to find a relation between IC properties and the difficulty of fitting the solution. That is because, as we show in Fig~\ref{fig:dual_variables_navier_stokes}, the dual variables do not have such extremely different values. This is certainly due to the fact that the tolerance~$\epsilon_\text{o}$ is set very loose~($0.01$) and that in these situations, all ICs are similarly difficult to fit. Still, some ICs have outlier values of~$\lambda$~(Fig~\ref{fig:dual_variables_navier_stokes}b), which does point to the fact that the FNO does struggle more to fit certain conditions. That being said, it not easy to identify what in those conditions make them hard~(Fig~\ref{fig:navier_stokes_ic_dual_variables}). Nevertheless, this is not an issue as we need not know beforehand which ICs are challenging: suffices it to run Alg.~\ref{alg:primal_dual} to solve~\eqref{eq:scl_problem}(O).

\begin{table}
	\centering
	\captionof{table}{Relative $L_2$ error on test set (mean $\pm$ standard deviation).}
	\label{tab:results_NO_std}
	\renewcommand{\arraystretch}{1.2}
	\begin{tabular}{lccc}
		& $\nu$ & \eqref{eq:NO} & \textbf{\eqref{eq:scl_problem}(O)}
		\\\hline
		\textbf{Burgers'}                       & $10^{-3}$ &$0.0540 \pm 0.0027\,\%$ & $0.0444 \pm 0.0020\,\%$
		\\\hline
		\multirow{3}{*}{\textbf{Navier-Stokes}} & $10^{-3}$ & $4.29 \pm 0.40 \,\%$  & $3.31 \pm 0.16 \,\%$  \\
												& $10^{-4}$ & $32.2 \pm 0.87 \,\%$  & $29.9 \pm 0.54 \,\%$  \\
												& $10^{-5}$ & $27.6 \pm 0.63\,\%$  & $26.0 \pm 0.33 \,\%$
		\\\hline
		\multicolumn{2}{l}{\textbf{Diffusion-Sorption}} &$0.274 \pm 0.049 \,\%$ & $0.218 \pm 0.036\,\%$
		\\\hline
	\end{tabular}
\end{table}

\begin{figure}[tbh]
	\centering
	\includegraphics[width=\linewidth]{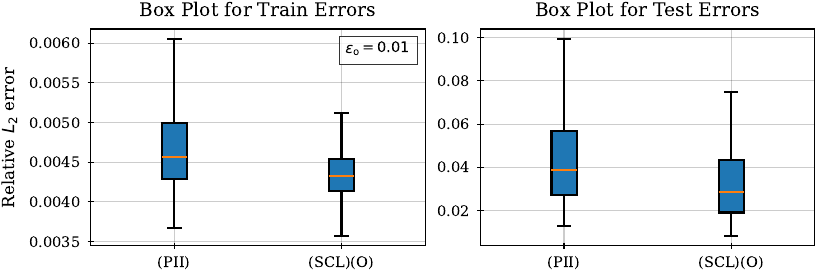}
	\caption{Distribution of train and test errors (across data points) for the Navier-Stokes equation with~$\nu = 10^{-3}$~(orange line indicates the median).}
	\label{fig:box_plot_navier_stokes}
\end{figure}

\begin{figure}[tbh]
	\centering
	\begin{minipage}{0.48\linewidth}
		\centering
		\includegraphics[width=\linewidth]{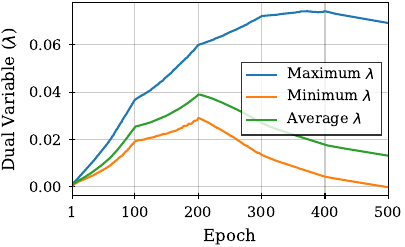}

		\hspace{8mm}{\small (a)}
	\end{minipage}
	\hfill
	\begin{minipage}{0.48\linewidth}
		\centering
		\includegraphics[width=\linewidth]{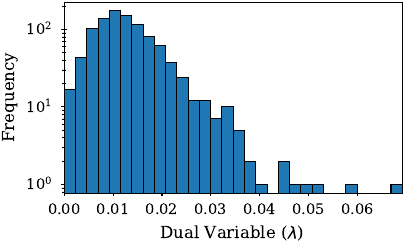}

		\hspace{8mm}{\small (b)}
	\end{minipage}
	\caption{Dual variables obtained by Alg.~\ref{alg:primal_dual} for the Navier-Stokes equation with $\nu = 10^{-3}$: (a)~evolution during training and (b)~distribution of the dual variables after training.}
	\label{fig:dual_variables_navier_stokes}
\end{figure}

\begin{figure}[tbh]
	\centering
	\includegraphics[width=\linewidth]{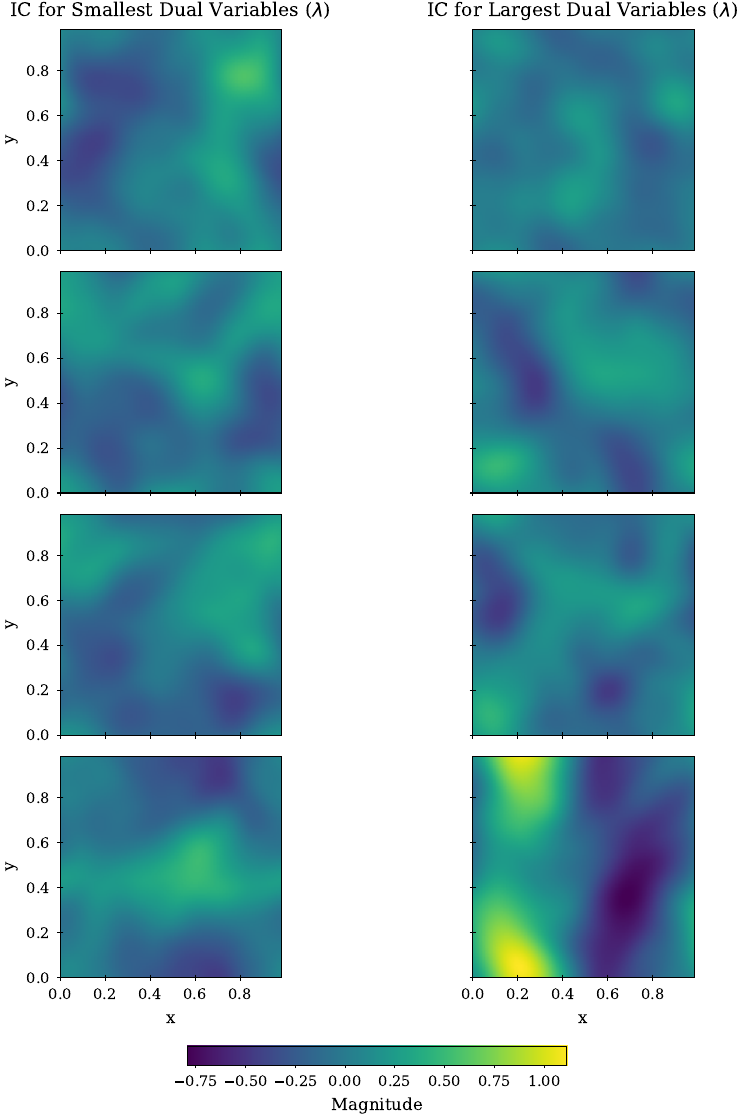}
	\caption{Initial conditions corresponding to the smallest and largest final dual variables for Navier-Stokes equation with $\nu = 10^{-3}$.}
	\label{fig:navier_stokes_ic_dual_variables}
\end{figure}

\clearpage
\section{Additional related work}
\label{app:related_work}

\subsection{Physics-informed neural networks}

Fitting MLPs to the solution of BVPs goes back to~\citep{Psichogios92a, Dissanayake94n, Lagaris98a}. The advent of differentiable programming and automatic differentiation, however, lead to an increased interest in this approach, which has since been used to tackle both forward and inverse problems involving a variety of PDEs~(see, e.g., \citep{Raissi19p, Wight21s, Chen20p, Lu21p, Basir22p, Yu22g, Xu23t}). This led to new architectures tailored for PDEs~\citep{Raissi19p, Fathony21m, Gao21p, Wang21u, Kang23p, Moseley23f, Cho24p, Chalapathi24s}, leveraging positional embedding~\citep{Wang21o} and adaptive activation functions~\citep{Jagtap20a}.

\paragraph{Training PINNs.} PINNs tend to be very sensitive to training hyperparameters, particularly the choice of collocation points and loss weights. Many works have theoretically and empirically investigated the origins of these issues~\citep{Krishnapriyan21c, Markidis21t, Wight21s, Wang21u, Wang22w, Wang22i, Grossmann24c}. Based on these observations, adaptive heuristics have been proposed to select the collocation points based on importance sampling~\citep{Nabian21e, Wu23a}, adversarial training~\cite{Wang22i}, rejection sampling~\citep{Daw23m}, and causality-inspired rules~\citep{Penwarden23a, Wang24r}. Similarly, empirical rules for determining the loss weights~[$\mu$ in~\eqref{eq:PINN}] have been developed using the magnitude of the gradients~\citep{Wang21u}, eigenvalues of the neural tangent kernel~\citep{Wang22w}, inverse-Dirichlet weighting~\citep{Maddu22i}, soft attention mechanisms~\citep{McClenny23s}, and (augmented)~Lagrangian formulations~\citep{Lu21p, Basir22p}. Other works have addressed these challenges by changing the problem formulation inspired by traditional numerical methods~\citep{Kharazmi21h, Chiu22c, Patel22t}, proposing different objective functions~\citep{Yu22g, Son21s}, and using sequential~\citep{Wight21s, Krishnapriyan21c} and transfer learning~\citep{Goswami20t, Chakraborty21t, Desai22o} techniques.

In contrast to these approaches, we address these issues jointly by using worst-case losses and constrained learning to obviate these hyperparameters. In fact, we prove that the constrained learning problems we pose yield (weak)~solutions of BVPs. Hence, it is not enough to use either adversarial training to estimate the worst-case loss as in~\citep{Wang22i} or constrained learning to manipulate the loss weights as in~\citep{Lu21p, Basir22p}. Both are required simultaneously. Leveraging findings from adversarially robust learning, we also replace the gradient methods used in~\citep{Wang22i}, i.e., the technique from~\citep{Madry18t}, by the sampling-based approach in~\citep{Robey21a}.

\subsection{Neural Operators}

In contrast to the MLPs and convolutional NNs~(CNNs) typically used in PINNs, NOs are NNs capable of handling infinite-dimensional inputs and outputs. They can therefore be trained to find BVP solutions for different IC or forcing functions. Many different architectures have been proposed, such as DeepONets~\citep{Lu21l}, FNOs~\citep{Li21f}, and NO based on U-Nets~\citep{Gupta23t}. FNOs in particular have become quite popular and garnered many efforts towards addressing their limitations, such as improving memory efficiency~\citep{Rahman23u}, designing equivariant FNOs~\citep{Helwig23g}, extending FNOs to general geometries~\citep{Li23f}, factorizing the Fourier transform~\citep{Tran23f}, and leveraging multiwavelets~\citep{Gupta21m}.

\paragraph{Training NOs.} Regardless of these improvements, the vast majority of NOs are trained in a supervised manner by minimizing their average error across samples as in~\eqref{eq:NO}~(see, e.g., \citep{Lu21l, Li20m, Kovachki23n}). Unless substantial domain knowledge has been used during data collection, challenging cases may be underrepresented in the dataset, which could hinder the accuracy of the NO. Although semi-supervised techniques involving PDE losses have also been used~\citep{Li24p}, computing the space-time derivatives needed to evaluate~$D_\pi[u]$ is challenging for NOs.

In this paper, we do not develop new NO architectures, but focus on the problem of training them. Explicitly, rather than targeting the average error, we target the maximum error across samples. This is much better suited to handle the heterogeneous difficulty in fitting the data. We also incorporate structure in the solution during training, without the need to design new architectures.

\subsection{Constrained and adversarially robust learning}

The main tool used in the development of SCL is constrained learning, or more specifically, robustness-constrained learning. Constrained learning is a technique to train ML systems under requirements, such as fairness~\citep{Kearns18p, Cotter19o, Chamon20p, Chamon23c} and robustness~\citep{Chamon20p, Robey21a, Hounie23a, Chamon23c}, or to handle applications in which we want to attain good performance with respect to more than one metric. As in unconstrained learning, it is formulated as statistical risk minimization problem, albeit with constraints. Despite its non-convexity in virtually every modern ML task, certain duality properties hold when using sufficiently expressive parametrizations, leading to a practical learning rule with generalization guarantees~\citep{Chamon20p, Chamon23c}. These duality results have also been exploited to automatically adapt each constraint specification to their underlying difficulty, striking better compromises between objective and requirements~\cite{Hounie23r}.

Aside from dealing with the nominal accuracy vs.\ robustness trade-off typical in ML systems, constrained learning has itself been used to optimize robust losses. Typically, this is done using some combination of gradient ascent and random initialization, restart, and pruning heuristics~\citep{Goodfellow15e, Madry18t, Dhillon18s, Wu20a, Cheng22c}. Using semi-infinite optimization techniques, however, these deterministic methods can be replaced by a sampling approach that has been successful in a variety of domains~\citep{Robey21a}. Different MCMC methods~\citep{Robert04m} have been used in this context, including Langevin Monte Carlo~(LMC)~\citep{Robey21a} and Metropolis-Hastings~(MH)~\citep{Hounie23a}.

\end{document}